\documentclass{article} % For LaTeX2e
\usepackage{iclr2026_conference,times}

\usepackage{amsmath,amsfonts,bm}

\def\eqref#1{equation~\ref{#1}}
\def\1{\bm{1}}

\DeclareMathAlphabet{\mathsfit}{\encodingdefault}{\sfdefault}{m}{sl}
\SetMathAlphabet{\mathsfit}{bold}{\encodingdefault}{\sfdefault}{bx}{n}

\usepackage{hyperref}
\usepackage{url}

\usepackage{inconsolata}
\usepackage{algorithm}
\usepackage[noend]{algpseudocode}
\usepackage{booktabs}
\usepackage{amsfonts}
\usepackage{amsmath}
\usepackage{bm}
\usepackage{graphicx}
\usepackage{subfig}
\usepackage{multirow}
\usepackage{array}
\usepackage{colortbl}
\usepackage{tabularx}
\usepackage{makecell}
\usepackage{wrapfig}
\usepackage{pifont}
\usepackage{courier}  % texttt

\usepackage{amsthm}

\newtheorem{proposition}{Proposition}

\newcommand{\semismall}{\fontsize{8.5pt}{10pt}\selectfont}

\definecolor{darkgreen}{RGB}{64,144,64}

\title{Exploring Knowledge Purification in Multi-Teacher Knowledge Distillation for LLMs}

\author{Ruihan Jin$^{1}$ \hspace{0.4cm}
    Pengpeng Shao$^{1,*}$ \hspace{0.4cm}
    Zhengqi Wen$^{1,}$\thanks{Corresponding authors.} \hspace{0.4cm}
    Jinyang Wu$^{1,*}$ \hspace{0.4cm}\\
    \textbf{Mingkuan Feng}$^{1}$ \hspace{0.4cm}
    \textbf{Shuo Yang}$^{1}$ \hspace{0.4cm}
    \textbf{Chu Yuan Zhang}$^{1}$ \hspace{0.4cm}
    \textbf{Jianhua Tao}$^{1,2,*}$ \\
    \textsuperscript{\rm 1}Department of Automation, Tsinghua University\\
    \textsuperscript{\rm 2}Beijing National Research Center for Information Science and Technology\\
    \semismall{\texttt{jinrh24@mails.tsinghua.edu.cn ppshao@tsinghua.edu.cn zqwen@tsinghua.edu.cn}}\\
    \semismall{\texttt{wu-jy23@mails.tsinghua.edu.cn jhtao@tsinghua.edu.cn}}
}

\iclrfinalcopy % Uncomment for camera-ready version, but NOT for submission.
\begin{document}

\maketitle

\begin{abstract}
Knowledge distillation has emerged as a pivotal technique for transferring knowledge from stronger large language models (LLMs) to smaller, more efficient models. However, traditional distillation approaches face challenges related to knowledge conflicts and high resource demands, particularly when leveraging multiple teacher models. In this paper, we introduce the concept of \textbf{Knowledge Purification}, which consolidates the rationales from multiple teacher LLMs into a single rationale, thereby mitigating conflicts and enhancing efficiency. To investigate the effectiveness of knowledge purification, we further propose five purification methods from various perspectives. Our experiments demonstrate that these methods not only improve the performance of the distilled model but also effectively alleviate knowledge conflicts. Moreover, router-based methods exhibit robust generalization capabilities, underscoring the potential of innovative purification techniques in optimizing multi-teacher distillation and facilitating the practical deployment of powerful yet lightweight models.
\end{abstract}

\section{Introduction}

The rapid advancement of LLM has revolutionized various domains, including question answering \citep{yue2025survey} and reasoning \citep{plaat2024reasoning}. The scaling law \citep{kaplan2020scaling} unveils the correlation between the model size and generation capability, yet the practical deployment of colossal LLMs is often constrained by computational cost and resource demands, emphasizing the need for building efficient and lightweight models that preserve their power.

As the extension of model compression \citep{buciluǎ2006model}, knowledge distillation \citep{hinton2015distilling} has emerged as a prominent solution to this challenge, which enables student models to inherit the capability of larger teacher models. Knowledge distillation is widely applied across various fields of machine learning \citep{kim2016sequence, park2019relational, tang2019distilling}. To enhance knowledge diversity and specialized domain competencies, transferring knowledge from a multi-teacher ensemble to the student model attracts significant academic interest. This focus leads to the development of multi-teacher knowledge distillation approaches, such as TinyLLM \citep{tian2025beyond} and TwT \citep{xu2025twt}.

However, existing multi-teacher knowledge distillation frameworks suffer from two significant drawbacks: (1) \textit{Knowledge Conflict}: Conflicting rationales among teacher LLMs are inevitable due to hallucinations, inconsistent reasoning paths or difference in expertise domains, impeding the effectiveness of knowledge transfer to the student model. This problem may become more pronounced as the number of teacher models increases. (2) \textit{High Resource Demands}: Incorporating knowledge from multiple teachers inherently escalates resource requirements, necessitating complex sampling procedures and intricate training pipelines, which subsequently raises computational cost.

To investigate the adaptability of existing multi-teacher knowledge distillation frameworks toward more teacher LLMs, we perform extended experiments with TinyLLM \citep{tian2025beyond}, incrementally introducing a series of teacher LLMs for distillation training(detailed in Appendix \ref{sec:extended_experiments_tinyllm}). As illustrated in Fig. \ref{fig:knowledge_conflict}, contrary to our expectation that enlarging the teacher LLM ensemble would enhance the capabilities of student models, the distillation performance actually declines as the number of teacher models further increases. This decline indicates the detrimental impact of the knowledge conflict among teacher LLMs.

In this paper, we introduce the concept of \textbf{Knowledge Purification} in multi-teacher knowledge distillation. The core idea is to condense the knowledge of multiple teacher models from the rationale perspective. The knowledge purification integrates the rationales generated by multiple teacher LLMs into one single, consolidated rationale, which is subsequently employed during the distillation. Hence, the student model is provided with the rationale that encapsulates the collective insights of the teachers, enabling more efficient and effective distillation training. By purifying the knowledge, we mitigate the hallucinations and divergent reasoning paths among the teacher LLMs, thereby alleviating inter-teacher knowledge conflicts.

\begin{figure}[t]
  \centering
  \vspace{-0.5cm}
  \subfloat[77M student]{
  \includegraphics[width=0.32\textwidth]{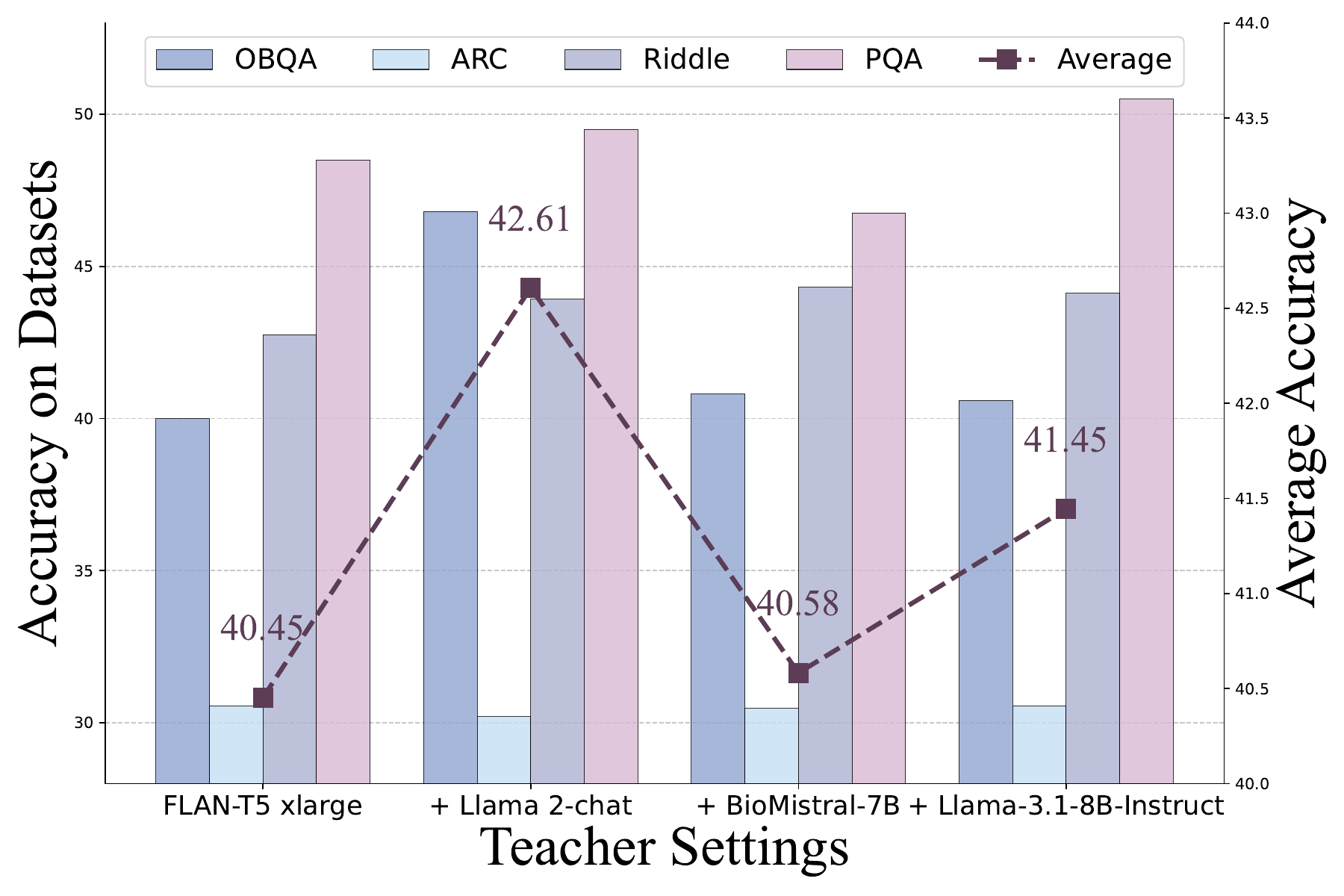}}
  \subfloat[248M student]{
  \includegraphics[width=0.32\textwidth]{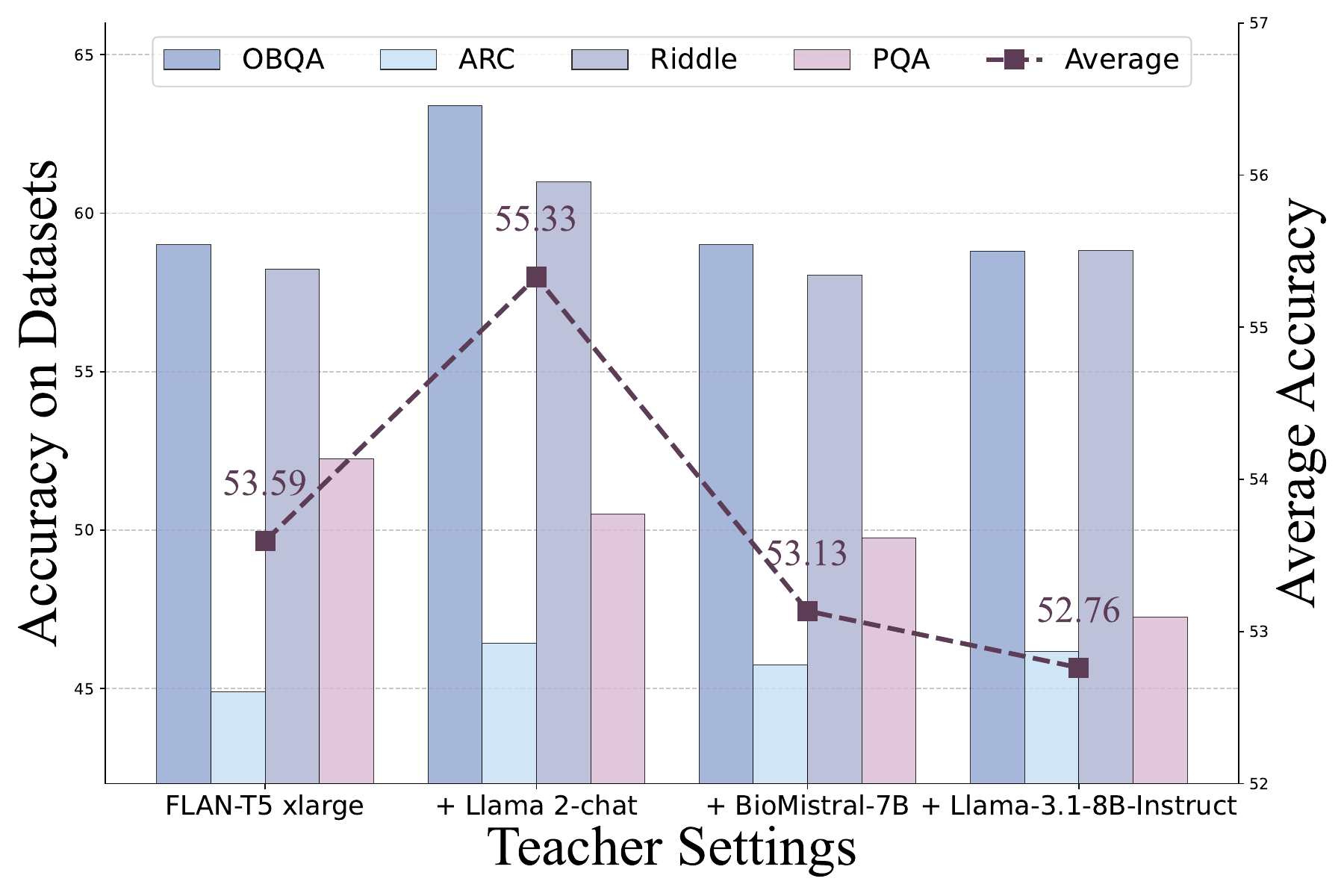}}
  \subfloat[783M student]{
  \includegraphics[width=0.32\textwidth]{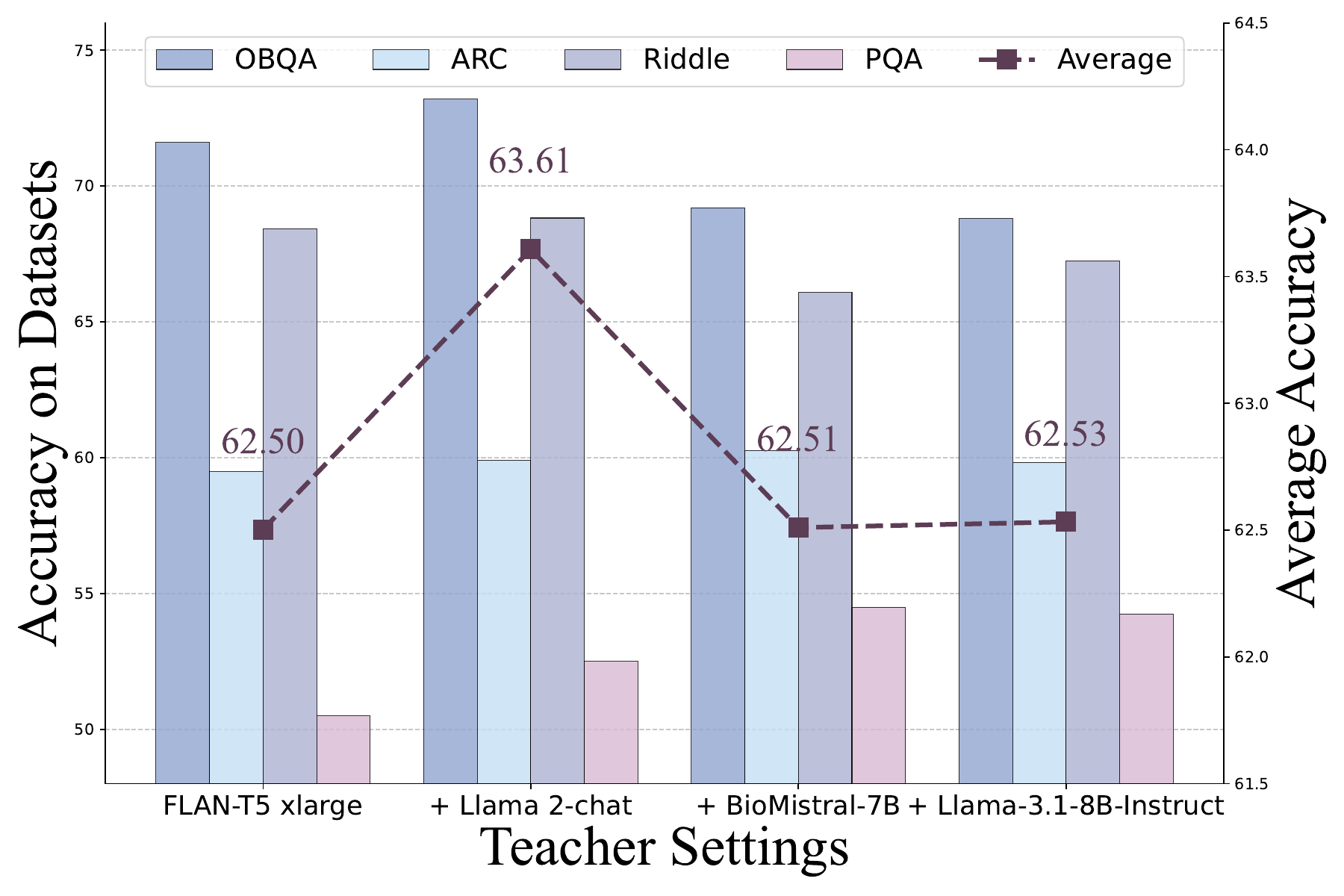}}
  \caption{Effects of increasing teacher LLMs on the performance of the TinyLLM framework.}
  \label{fig:knowledge_conflict}
  \vspace{-0.5cm}
\end{figure}

We further propose five methods to facilitate knowledge purification from distinct perspectives including aggregation, routing, and reinforcement learning (RL)-based selection. To thoroughly evaluate these approaches, we conduct extensive experiments on commonsense and biomedical reasoning tasks. Our results show that knowledge purification methods significantly enhance knowledge distillation performance across different student models and datasets. The effectiveness in alleviating knowledge conflicts is further verified. Furthermore, methods based on LLM routing demonstrate outstanding performance on out-of-domain datasets, underscoring the potential of utilizing knowledge purification to guide multi-teacher distillation across a broader spectrum.

Our contributions are summarized as follows:

\begin{itemize}
    \vspace{-0.2cm}
    \setlength{\itemsep}{0cm}
    \item We identify the limitations of existing multi-teacher knowledge distillation frameworks, highlighting knowledge conflicts and high resource demands that hinder effective knowledge transfer.
    \item We introduce the concept of knowledge purification, which mitigates knowledge conflicts and enhances training efficiency by consolidating the rationales from multiple teachers into one coherent rationale. We propose five knowledge purification methods from different perspectives of aggregation, routing, and RL-based selection.
    \item Extensive experiments verify improvements of proposed methods in distillation performance and conflict mitigation. Further experiments on out-of-domain datasets illustrate the potential of knowledge purification in facilitating the generalization of multi-teacher knowledge distillation.

\end{itemize}

\section{Related Work}
\label{related_work}

\paragraph{Multi-Teacher Knowledge Distillation}

Compared to utilizing single teacher, multi-teacher knowledge distillation harnesses broad knowledge diversity and rich reasoning paths, thereby enhancing the capabilities and generalization performance of student models \citep{liu2020adaptive, zhang2024llm}. TinyLLM \citep{tian2025beyond} proposes a distillation paradigm that facilitates small student LLM to learn from rationales generated by two teacher LLMs. \citep{xu2025twt} incorporates rejection sampling and habitual reasoning in distillation to effectively balance computational cost and performance. These methods are constrained by knowledge conflicts among teacher LLMs, underscoring effective strategies for resolving these conflicts during distillation.

\paragraph{LLM Routing}

In alignment with Mixture-of-Expert (MoE) \citep{jacobs1991adaptive, collobert2003scaling, jiang2024mixtral}, LLM routing aims to select the optimal LLM from diver candidates for a given question, enabling efficient activation of LLM ensembles. HybridLLM \citep{ding2024hybrid} leverages a hybrid approach to optimize both cost and quality for LLM pairs. Similarly, RouterLLM \citep{ong2024routellm} explores effective methods for dynamic routing between a strong and a weak LLM. RouterDC \citep{chen2024routerdc} introduces dual contrastive learning and improves the routing performance. Recent innovations further explore the structured router \citep{jin2025radialrouter} and the employment of reinforcement learning\citep{yue2025masrouter}. Within the multi-teacher knowledge distillation framework, LLM routing presents a promising approach for effective knowledge purification.

\section{Formulation}
\label{formulation}

\subsection{Preliminaries}

Generally, knowledge distillation uses the soft outputs/labels generated by the strong teacher model $T$ to transfer knowledge to a weak student model $S$. In this paper, we focus primarily on multiple choice question answering problems in NLP, utilizing LLMs as the main subjects of our study.

\paragraph{Multiple Choice Question Answering}

In the $k$-multiple choice question answering task, given a question $q\in\mathcal{Q}$ and a corresponding candidate options set $\mathcal{O}=\{o_1,o_2,\ldots,o_k\}$, the objective of LLMs is to select the correct option from $\mathcal{O}$ that aligns with the ground truth option $o^*\in\mathcal{O}$. Besides, LLMs are encouraged to generate rationales, which have been shown to significantly enhance their performance \citep{wei2022chain}. The answering process of an LLM $M$ (either the teacher $T$ or the student $S$) is formulated as:
\begin{equation}
    o=M(q,\mathcal{O},p_o), r=M(q,\mathcal{O},p_r),
\end{equation}
where $p_o$ and $p_r$ denote the prompt for predicting options and generating rationales, respectively.

\paragraph{Multi-Teacher Knowledge Distillation}

We consider the rationale generated by the teacher LLM to be an embodiment of knowledge. We sample this rationale as $r_T=T(q,\mathcal{O},p_r)$ from the teacher $T$ and construct the training set $\mathcal{D}=\{(q,\mathcal{O},o^*,r_T)\}$ with $|\mathcal{D}|$ samples. The knowledge distillation for LLM leveraging rationales \citep{hsieh2023distilling} can be formulated as:
\begin{equation}
\label{eq:distill_step_by_step}
    \mathcal{L}_\text{KD}=\mathcal{L}_\text{PR}+\lambda\mathcal{L}_\text{DL},
\end{equation}
where $\lambda$ is a hyper-parameter balance between the prediction loss $\mathcal{L}_\text{PR}$ and the distillation loss $\mathcal{L}_\text{DL}$. The prediction loss $\mathcal{L}_\text{PR}$ guides the student to learn directly from ground truth options and the distillation loss $\mathcal{L}_\text{DL}$ supervises the student to inherit knowledge from the teacher's rationale. Details of the knowledge distillation for LLM are introduced in Appendix \ref{sec:details_of_knowledge_distillation}.

Compared to the single-teacher approach, multi-teacher knowledge distillation leverages an ensemble of $n$ teacher LLMs $\mathcal{T}=\{T_1,T_2,\ldots,T_n\}$ to equip the student model with a broader spectrum of knowledge, leading to stronger generalization capabilities. In this context, we expand the training set to $\mathcal{D}=\{(q,\mathcal{O},o^*,\mathcal{R})\}$, where $\mathcal{R}=\{r_{T_1},r_{T_2},\ldots,r_{T_n}\}$ denotes the rationales generated by each teacher LLM for the question $q$ and corresponding answer options $\mathcal{O}$. \citep{tian2025beyond} extends the knowledge distillation framework to incorporate multiple teachers as:
\begin{equation}
\label{eq:tinyllm}
    \mathcal{L}_\text{MTKD}=\mathcal{L}_\text{PR}+\sum_{j=1}^n\lambda_j{\mathcal{L}_\text{DL}}_j,
\end{equation}
where ${\mathcal{L}_\text{DL}}_j$ is the distillation loss with respect to the $j$-th teacher's rationale and $\lambda_j$ denotes the importance weight for $T_j$.

\subsection{Motivation Analysis}

Although most frameworks are originally designed to incorporate a fixed number of teacher LLMs for knowledge distillation, practically, we expect to enhance the capability and expertise of the student model by enlarging the teacher ensemble. We consider TinyLLM as a representative method and conduct experiments to explore its adaptability to more teacher LLMs, as detailed in Appendix \ref{sec:extended_experiments_tinyllm}. Results in Fig. \ref{fig:knowledge_conflict} exhibit that, in these scenarios, the performance of TinyLLM significantly declines as the number of teacher LLMs further increases, which indicates the detrimental impact of knowledge conflicts among teacher LLMs. Furthermore, increasing the number of teachers can impose additional challenges related to computational resources and hyperparameter tuning. Therefore, there is an urgent need to develop a new framework to address these issues.

\subsection{Knowledge Purification}

In this section, we introduce the concept of \textbf{Knowledge Purification} in multi-teacher knowledge distillation. The knowledge purification process integrates the rationales generated by multiple teacher LLMs into one single, consolidated rationale, which is subsequently employed during the distillation. Specifically, given the rationales generated by each teacher LLM as $\mathcal{R}=\{r_{T_1},r_{T_2},\ldots,r_{T_n}\}$,  the process of knowledge purification can be expressed as:
\begin{equation}
\label{eq:knowledge_purification}
    r_\text{P}=f(\mathcal{R})=f(r_{T_1},r_{T_2},\ldots,r_{T_n}),
\end{equation}
where $f(\cdot)$ denotes the purification process and $r_\text{P}$ denotes the consolidated rationale. Through knowledge purification, we aim to mitigate knowledge conflicts among the teacher LLMs and enhance distillation efficiency. Our study investigates the impact of different purification methods on the performance of knowledge distillation.

Incorporating knowledge purification within the multi-teacher knowledge distillation framework alters the training objective, formulated as:
\begin{equation}
\label{eq:multi_teacher_knowledge_distillation_with_knowledge_purification}
    \mathcal{L}_\text{MTKD-KP} = \mathcal{L}_\text{PR}+\lambda \mathcal{L}_\text{DL-KP},
\end{equation}
where $\mathcal{L}_\text{DL-KP}$ denotes the distillation loss calculated using the consolidated rationale $r_\text{P}$:
\begin{equation}
    \mathcal{L}_\text{DL-KP}=-\frac{1}{|\mathcal{D}|}\sum_{(q,\mathcal{O},\mathcal{R})\in\mathcal{D}}\sum_{i=1}^{|r_\text{P}|}\log p({r_\text{P}}_i|r_{<i},q,\mathcal{O},p_r).
\end{equation}

\section{Methodology}
\label{methodology}

\begin{figure*}[t]
  \centering
  \vspace{-0.2cm}
  \includegraphics[width=0.96\textwidth]{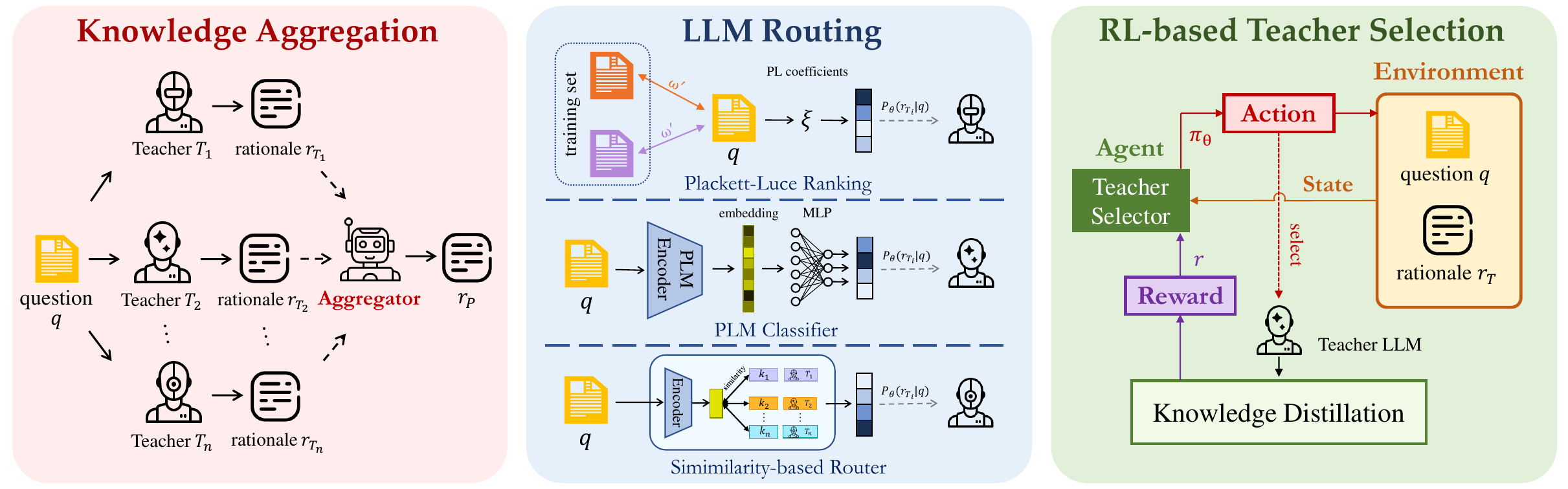}
  \caption{An illustration of five knowledge purification methods proposed in our work.}
  \label{fig:metateacher}
  \vspace{-0.3cm}
\end{figure*}

As shown in Fig. \ref{fig:metateacher}, we propose five methods to perform knowledge purification defined by Eq. \ref{eq:knowledge_purification}.

\paragraph{Knowledge Aggregation}

We first consider performing knowledge purification by employing an aggregator, which is a global LLM that accepts all the rationales generated by teacher LLMs and combines the instructions to generate a consolidated rationale. We use an instruction-tuning paradigm \citep{wei2021finetuned} to provide instruction prompts containing in-context example as input and perform aggregation in a generation fashion.

\paragraph{LLM Routing}

Build upon a pool of candidate LLMs, an LLM router is designed to allocate an input question to the most appropriate LLM. Unlike aggregation, the key aspect of routing is selecting one rationale based on the probabilities predicted by the router as:
\begin{equation}
    r_\text{P}=\mathop{\arg\max}_{r_{T_i}}P_\theta(r_{T_i}|q).
\end{equation}
In this paper, we design three representative LLM routing methods for knowledge purification:

\begin{itemize}
    \setlength{\itemsep}{0cm}
    \item \textbf{Plackett-Luce ranking} We use a Plackett-Luce (PL) model \citep{luce1959individual, plackett1975analysis} for ranking multiple teacher LLMs. In the Plackett-Luce model, the probability of selecting a candidate rationale is modeled in a softmax relationship:
    \begin{equation}
        P_\theta(r_{T_i}|q)=\frac{e^{\xi_i}}{\sum_{j=1}^ne^{\xi_j}},\quad i=1,\ldots n,
    \end{equation}
    We learn the PL coefficients $\xi=\{\xi_i:i=1,\ldots n\}$ by solving:
    \begin{equation}
    \label{eq:plackett_luce}
        \mathop{\arg\min}_\xi\mathop{\sum}_{q',\mathbf{y}_\text{gt}}[\omega'\cdot \ell(\mathbf{y}_\text{gt}, \frac{e^\xi}{\sum_{j=1}^ne^{\xi_j}})],
    \end{equation}
    where $\ell$ denotes the cross-entropy loss, and $\mathbf{y}_\text{gt}$ denotes the ground truth label for the optimal selection. Inspired by \citep{ong2024routellm}, we use the weight $\omega'$ to measure the similarity between the input question $q$ and a question $q'$ in the database as $\omega'=\gamma^{1+\frac{\epsilon\cdot\epsilon'}{\Vert\epsilon\Vert\cdot\Vert\epsilon'\Vert}}$, where $\gamma$ is a hyper-parameter, and $\epsilon$ and $\epsilon'$ denote text embeddings for $q$ and $q'$, respectively.
    
    \item \textbf{PLM classifier} We adopt a pre-trained language model (PLM) to extract textual features for standard text classification. Specifically, we employ a PLM encode the input question $q$ into a semantic embedding $h_\text{CLS}$ which is derived from the final hidden state corresponding to the special classification token (CLS). Subsequently, we use a two-layer perceptron to predict the probabilities of routing to each rationales $r_{T_i}\in\mathcal{R}$ as:
    \begin{equation}
        P_\theta(r_{T_i}|q)=\frac{e^{W_{i2}(W_{i1}h_\text{CLS}+b_{i1})+b_{i2}}}{\sum_{j=1}^ne^{W_{j2}(W_{j1}h_\text{CLS}+b_{j1})+b_{j2}}},\quad i=1,\ldots n,
    \end{equation}
    where $W_i$ and $b_i$ denote the parameters of the MLP corresponding to $T_i$.
    
    \item \textbf{Similarity-based router} We follow RouterDC \citep{chen2024routerdc} to perform similarity-based LLM routing. We construct $n$ trainable LLM embeddings $\{\mathbf{k}_i:i\in1,\ldots,n\}$ and calculate the cosine similarities between the question embedding and LLM embeddings for routing:
    \begin{equation}
        P_\theta(r_{T_i}|q)=\frac{e^{\text{sim}\langle\mathcal{E}(q),\mathbf{k}_i\rangle}}{\sum_{j=1}^ne^{\text{sim}\langle\mathcal{E}(q),\mathbf{k}_j\rangle}},\quad i=1,\ldots n,
    \end{equation}
    where $\mathcal{E}$ denotes a language encoder, and $\text{sim}\langle\cdot,\cdot\rangle$ denotes the cosine similarity. The router is trained with two contrastive losses. 
\end{itemize}

Details of all LLM routing methods are introduced in Appendix \ref{sec:details_of_routing}.

\paragraph{RL-based Teacher Selection}

Inspired by \citep{yuan2021reinforced}, we adopt a reinforcement learning (RL) framework to dynamically select teacher LLMs for knowledge purification. We define the state $s_i$ to encapsulate characteristics of the question $q$ and the rationales of the $i$-th teacher LLMs $r_{T_i}$ as:
\begin{equation}
\label{eq:rl_state}
    s_i=[\mathcal{E}(q), \mathcal{E}(r_{T_i})\cdot \mathbb{I}(T_i(q,\mathcal{O},p_o)=o^*)]\in\mathbb{R}^{2d},
\end{equation}
where $\mathcal{E}$ is a language encoder, and $\mathbb{I}(T_i(q,\mathcal{O},p_o)=o^*)$ indicates whether the teacher $T_i$ answers correctly. We design the policy function $\pi_\theta$ as:
\begin{equation}
\label{eq:rl_policy}
    \pi_\theta(s_i,a_i)=a_i\sigma(\mathbf{W}_is_i+\mathbf{b}_i)+(1-a_i)(1-\sigma(\mathbf{W}_is_i+\mathbf{b}_i)),
\end{equation}
where $\sigma$ denotes the sigmoid function, and the action $a_i\in\{0,1\}$ indicates whether to select the teacher $T_i$. Consider the definition of knowledge purification, we adopt the teacher LLM that receives the highest prediction score $\sigma(\mathbf{W}_is_i+\mathbf{b}_i)$ and use it to guide the distillation process.

The trainable parameter of the teacher selector $\theta=\{\mathbf{W}\in\mathbb{R}^{n\times2d},\mathbf{b}\in\mathbb{R}^{n\times 1}\}$ is optimized using the standard policy gradient method:
\begin{equation}
\label{eq:rl_optimization}
    \theta\leftarrow\theta+\beta\sum_ir\sum_{q\in\mathcal{Q}}\nabla_\theta\pi_\theta(s_i, a_i),
\end{equation}
where $\beta$ denotes the learning rate. The reward $r=-\mathcal{L}_\text{PR}-\mathcal{L}_\text{DL}$ is computed based on the performance of the student model. During training, we alternately perform the knowledge distillation and the RL training. Detailed training algorithm is represented in Appendix \ref{sec:details_of_rl_based_teacher_selection}.

\begin{table*}[t]
\small
\centering
\vspace{-0.5cm}
\caption{Overall performance of multi-teacher knowledge distillation on four datasets. The best results across different datasets are highlighted in \textbf{bold}, with the second-best results are \underline{underlined}. \textbf{Average} denotes average accuracy.}
\begin{tabular}{p{1.1cm}<{\centering}p{3.1cm}p{1.4cm}<{\centering}p{1.3cm}<{\centering}p{1.4cm}<{\centering}p{1.3cm}<{\centering}p{1.4cm}<{\centering}}
    \toprule
    \textbf{Setting} & \textbf{Method} & \textbf{OBQA} & \textbf{ARC} & \textbf{Riddle} & \textbf{PQA} & \textbf{Average} \\
    \midrule
    \multirow{4}{*}{\textit{Teacher}} & FLAN-T5 xlarge & 69.20 & 68.24 & 53.73 & 71.50 & 65.67 \\
                             & Llama 2-chat   & 54.60 & 43.35 & 43.73 & 54.50 & 49.05 \\
                             & BioMistral-7B     & 51.80 & 51.59 & 23.14 & 73.25 & 49.95 \\
                             & Llama-3.1-8B-Instruct      & 65.60 & 71.67 & 60.98 & 75.00 & 68.31 \\
    \midrule
    \midrule
    \multirow{9}{*}{\rotatebox{90}{\thead{FLAN-T5 small (77M)\\ \textit{as Student}}}} & Inference & 16.60 & 19.31 & 13.33 & 28.00 & 19.31 \\
    & Fine-tuning & 45.60 & \underline{31.76} & \underline{49.22} & 47.50 & 43.52 \\
    & Distilling-Step-by-Step & 41.00 & 30.73 & 44.90 & 49.25 & 41.47 \\
    & TinyLLM & 40.60 & 30.56 & 44.12 & \textbf{54.25} & 42.38 \\
    \cmidrule{2-7}
    & \cellcolor{gray!10}Knowledge Aggregation & \cellcolor{gray!10}40.40 & \cellcolor{gray!10}30.47 & \cellcolor{gray!10}43.92 & \cellcolor{gray!10}53.25 & \cellcolor{gray!10}42.01 \\
    & \cellcolor{gray!10}Plackett-Luce Ranking & \cellcolor{gray!10}42.00 & \cellcolor{gray!10}\underline{31.76} & \cellcolor{gray!10}45.69 & \cellcolor{gray!10}50.50 & \cellcolor{gray!10}42.49 \\
    & \cellcolor{gray!10}PLM Classifier & \cellcolor{gray!10}44.20 & \cellcolor{gray!10}30.64 & \cellcolor{gray!10}\underline{49.22} & \cellcolor{gray!10}\underline{53.75} & \cellcolor{gray!10}44.45 \\
    & \cellcolor{gray!10}Similarity-based Router & \cellcolor{gray!10}\textbf{48.60} & \cellcolor{gray!10}\textbf{32.19} & \cellcolor{gray!10}\textbf{49.61} & \cellcolor{gray!10}52.25 & \cellcolor{gray!10}\textbf{45.66} \\
    & \cellcolor{gray!10}Teacher Selection & \cellcolor{gray!10}\underline{46.80} & \cellcolor{gray!10}30.39 & \cellcolor{gray!10}48.82 & \cellcolor{gray!10}52.50 & \cellcolor{gray!10}\underline{44.63} \\
     
    \midrule
    \midrule
    \multirow{9}{*}{\rotatebox{90}{\thead{FLAN-T5 base (248M)\\ \textit{as Student}}}} & Inference & 31.00 & 23.00 & 30.78 & 46.00 & 32.70 \\
    & Fine-tuning & 61.00 & 43.61 & 56.86 & 51.00 & 53.12 \\
    & Distilling-Step-by-Step & 59.00 & 46.01 & 59.41 & 52.50 & 54.23 \\
    & TinyLLM & 58.80 & 46.18 & 58.82 & 47.25 & 52.76 \\
    \cmidrule{2-7}
    & \cellcolor{gray!10}Knowledge Aggregation & \cellcolor{gray!10}58.00 & \cellcolor{gray!10}45.75 & \cellcolor{gray!10}58.43 & \cellcolor{gray!10}51.50 & \cellcolor{gray!10}53.42 \\
    & \cellcolor{gray!10}Plackett-Luce Ranking & \cellcolor{gray!10}62.40 & \cellcolor{gray!10}46.27 & \cellcolor{gray!10}58.63 & \cellcolor{gray!10}\underline{54.75} & \cellcolor{gray!10}55.51 \\
    & \cellcolor{gray!10}PLM Classifier & \cellcolor{gray!10}63.60 & \cellcolor{gray!10}\underline{46.35} & \cellcolor{gray!10}59.22 & \cellcolor{gray!10}\textbf{55.00} & \cellcolor{gray!10}56.04 \\
    & \cellcolor{gray!10}Similarity-based Router & \cellcolor{gray!10}\textbf{65.60} & \cellcolor{gray!10}\underline{46.35} & \cellcolor{gray!10}\underline{60.78} & \cellcolor{gray!10}53.50 & \cellcolor{gray!10}\underline{56.56} \\
    & \cellcolor{gray!10}Teacher Selection & \cellcolor{gray!10}\underline{65.00} & \cellcolor{gray!10}\textbf{46.70} & \cellcolor{gray!10}\textbf{61.76} & \cellcolor{gray!10}53.25 & \cellcolor{gray!10}\textbf{56.68} \\
    
    \midrule
    \midrule
    \multirow{9}{*}{\rotatebox{90}{\thead{FLAN-T5 large (783M)\\ \textit{as Student}}}} & Inference & 50.40 & 51.07 & 39.80 & 45.50 & 46.69 \\
    & Fine-tuning & 72.00 & 60.26 & 67.45 & 53.00 & 63.18 \\
    & Distilling-Step-by-Step & 71.60 & 60.00 & 68.43 & 51.00 & 62.76 \\
    & TinyLLM & 68.80 & 59.83 & 67.25 & 54.25 & 62.53 \\
    \cmidrule{2-7}
    & \cellcolor{gray!10}Knowledge Aggregation & \cellcolor{gray!10}71.40 & \cellcolor{gray!10}59.74 & \cellcolor{gray!10}68.63 & \cellcolor{gray!10}53.50 & \cellcolor{gray!10}63.32 \\
    & \cellcolor{gray!10}Plackett-Luce Ranking & \cellcolor{gray!10}72.60 & \cellcolor{gray!10}59.48 & \cellcolor{gray!10}69.41 & \cellcolor{gray!10}56.50 & \cellcolor{gray!10}64.50 \\
    & \cellcolor{gray!10}PLM Classifier & \cellcolor{gray!10}74.20 & \cellcolor{gray!10}59.66 & \cellcolor{gray!10}68.24 & \cellcolor{gray!10}\underline{62.50} & \cellcolor{gray!10}66.40 \\
    & \cellcolor{gray!10}Similarity-based Router & \cellcolor{gray!10}\textbf{76.60} & \cellcolor{gray!10}\underline{60.60} & \cellcolor{gray!10}\textbf{70.59} & \cellcolor{gray!10}61.00 & \cellcolor{gray!10}\underline{67.20} \\ 
    & \cellcolor{gray!10}Teacher Selection & \cellcolor{gray!10}\underline{75.20} & \cellcolor{gray!10}\textbf{61.12} & \cellcolor{gray!10}\underline{70.39} & \cellcolor{gray!10}\textbf{63.50} & \cellcolor{gray!10}\textbf{67.55} \\
    
    \bottomrule
\end{tabular}
\vspace{-0.4cm}
\label{tab:results}
\end{table*}

\section{Experiments}

\paragraph{Models} We consider a multi-teacher ensemble of four LLMs: FLAN-T5 xlarge (2.85B), Llama 2-chat \citep{touvron2023llama}(7B), BioMistral-7B \citep{labrak2024biomistral}, and Llama-3.1-8B-Instruct \citep{dubey2024llama}. We conduct experiments using FLAN-T5 \citep{chung2024scaling} small (77M), base (248M), and large (783M) as student models, respectively.

\paragraph{Datasets} We conduct experiments on four multiple choice question answering datasets in commonsense reasoning and biomedical reasoning. For \textbf{commonsense reasoning}, we consider OpenBookQA (OBQA) \citep{mihaylov2018can}, AI2 Reasoning Challenge (ARC) \citep{clark2018think}, and RiddleSense (Riddle) \citep{lin2021riddlesense}. For \textbf{biomedical reasoning}, we consider PubMedQA (PQA) \citep{jin2019pubmedqa}. To ensure a fair comparison among knowledge purification methods, we randomly retain 80\% of the training set samples for distillation training, and the remaining 20\% serve as the public set. A joint dataset, composed of the public set of each dataset, is utilized for training LLM routers. Details of dataset construction are provided in Appendix \ref{sec:details_of_dataset_construction}.

\paragraph{Metrics}

We calculate the accuracy of the distilled student model in multiple choice question answering tasks as the primary performance metric:
\begin{equation}
    \text{ACC}=\frac{1}{|\mathcal{Q}|}\sum_{q\in\mathcal{Q}}\mathbb{I}(S(q,\mathcal{O},p_o)=o^*).
\end{equation}

Furthermore, we assess the effectiveness of knowledge purification methods in mitigating knowledge conflicts among teacher LLMs by calculating the \textit{Conflict Mitigation Value} (CMV). We define CMV as the average accuracy improvement achieved through knowledge purification in distillation training with a series of incremental teacher LLMs, compared to the baseline TinyLLM framework:
\begin{equation}
    \text{CMV}=\frac{1}{n-1}\sum_{i=2}^n(\text{ACC}_{\text{KP},|\mathcal{T}|=i}-\text{ACC}_{\text{TinyLLM},|\mathcal{T}|=i}).
\end{equation}

\paragraph{Implementation details}

For knowledge distillation, we use the AdamW \citep{loshchilov2019decoupled} optimizer and set the learning rate to $5\times10^{-5}$, the batch size to 8, and the maximum input length to 512. We find that $\lambda=4$ works best for balanced training. We employ GPT-4 \citep{achiam2023gpt} as the knowledge aggregator. mDeBERTaV3-base \citep{he2021debertav3} is adopted as the language encoder for the PLM classifier, the similarity-based router and the RL-based teacher selector. Details of applying knowledge purification approaches are included in Appendix \ref{sec:details_of_knowledge_purification_approaches}. All experiments are conducted on four NVIDIA A100 80GB GPUs.

\begin{table*}[t]
\small
\centering
\vspace{-0.3cm}
\caption{Comparison of knowledge purification methods from a practical perspective. Each method is analyzed in: \textbf{Prior}, \textbf{Parameters}, \textbf{Training Necessity}, \textbf{Transferability}, and \textbf{Latency}.}
\begin{tabular}{p{3.1cm}p{0.8cm}<{\centering}p{1.6cm}<{\centering}p{2.5cm}<{\centering}p{2.1cm}<{\centering}p{1.2cm}<{\centering}}
    \toprule
    \textbf{Method} & \textbf{Prior} & \textbf{Parameters} & \textbf{Training Necessity} & \textbf{Transferability} & \textbf{Latency} \\
    \midrule
    Knowledge Aggregation      & $q, \mathcal{R}$ & $>$10B & \ding{55} & \checkmark & $\text{s}$ \\
    Plackett-Luce Ranking      & $q$ & $\sim$278M & \ding{55} & \checkmark & $\text{s}$ \\
    PLM Classifier             & $q$ & $\sim$278M & \checkmark & \checkmark & $\text{ms}$ \\
    Similarity-based Router    & $q$ & $\sim$278M & \checkmark & \checkmark & $\text{ms}$ \\
    Teacher Selection          & $q, \mathcal{R}$ & $\sim$278M & \checkmark & \ding{55} & $\text{min}$ \\
    \bottomrule
\end{tabular}
\vspace{-0.3cm}
\label{tab:pratical_application}
\end{table*}

\subsection{Results}

We conduct comprehensive experiments to evaluate the performance of multi-teacher knowledge distillation employing knowledge purification methods. Our comparison includes the proposed methods against the original teacher LLMs and four baseline approaches. These baselines consist of direct inference, fully fine-tuning the student model, Distilling-Step-by-Step \citep{hsieh2023distilling} which leverages teacher’s rationale as additional supervision, and TinyLLM \citep{tian2025beyond} which integrates all rationales generated by all teacher LLMs. Details of baseline approaches are included in Appendix \ref{sec:details_of_baselines}. The overall experimental results are presented in Tab. \ref{tab:results}.

From a performance perspective, no single method demonstrates a significant advantage over the others. Overall, the similarity-based router and the RL-based teacher selection consistently achieve the highest average accuracies, ranking either first or second across the distillation experiments involving all three student models. For distilling the FLAN-T5 small model, the similarity-based router attains the highest average accuracy of 45.66\%, exceeding baselines by at least 4.9\%. For distilling the FLAN-T5 base and FLAN-T5 large models, the RL-based teacher selection exhibits superior performance, surpassing the best baseline by 4.5\% and 6.9\%, respectively.

Generally, the implementation of the knowledge purification method demonstrates advantages over the baseline, underscoring its positive impact on knowledge distillation. The performance of LLM routing is exemplary, while the PL ranking shows slightly weaker results compared to the other two LLM routers. In contrast, the overall performance of the knowledge aggregation method remains relatively inferior, with no significant improvement observed. This suggests that, despite the strong capacity of the aggregator (e.g., GPT-4), the enhancing effect of the consolidated rationale through aggregation on knowledge distillation remains uncertain.

Compared to the teacher LLMs, the distilled 783M student model exceeds the average accuracy of three teachers and ranks second only to Llama-3.1-8B-Instruct, illustrating the effectiveness of knowledge purification. Additionally, we observe that knowledge purification yields more substantial improvements in larger student models. This can be attributed to the stronger capacity of larger models to learn from the generated rationales, leading to the notable outcomes achieved through our targeted purification approaches. In contrast, smaller models tend to focus primarily on fitting the final option, which limits the enhancements gained from knowledge purification.

\subsection{Qualitative Analysis on Knowledge Purification Methods}

In this section, we perform a systematic analysis on the proposed knowledge purification methods from a practical perspective. We consider the following metrics to evaluate each methods: \textbf{Prior} refers to the input of each model; \textbf{Parameters} refers to additional amount of parameters introduced for purification; \textbf{Training Necessity} refers to whether training is required; \textbf{Transferability} refers to whether the method can be applied to a new dataset without requiring additional training; and \textbf{Latency} refers to the time scale required for processing a single instance of knowledge purification.

We present our analysis in Tab. \ref{tab:pratical_application}. For priors, LLM routing methods require only the question as input and do not necessitate pre-sampling rationales from the teacher LLM. This enables us to leverage pretrained LLM routers to guide rationale sampling in multi-teacher knowledge distillation as further demonstrated in Section \ref{sec:router_guided_distillation}. Aside from knowledge aggregation, which employs a strong LLM (e.g., GPT-4) for synthesis, other methods only introduce additional parameters on the size of the PLM. The training process of the teacher selector is closely coupled with rewards from knowledge distillation, necessitating retraining when applied to new datasets. This training also introduces a significant delay in the implementation of RL-based teacher selection, considerably exceeding the millisecond-level delay of the PLM classifier and the similarity-based router.

\subsection{Adaptability toward Multiple Teacher LLMs}

\begin{figure}[t]
  \centering
  \vspace{-0.5cm}
  \subfloat[77M student]{
  \includegraphics[width=0.30\textwidth]{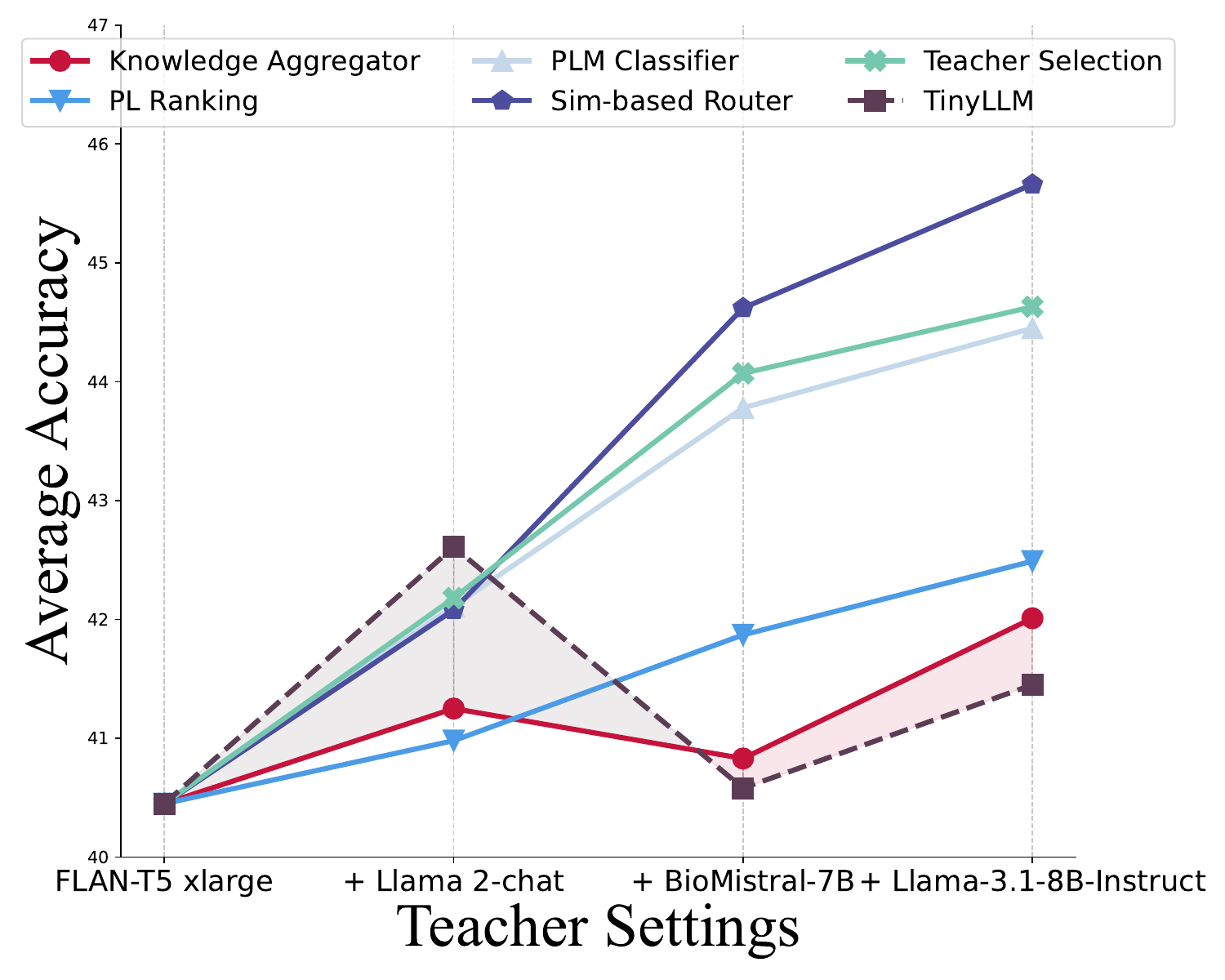}}
  \subfloat[248M student]{
  \includegraphics[width=0.30\textwidth]{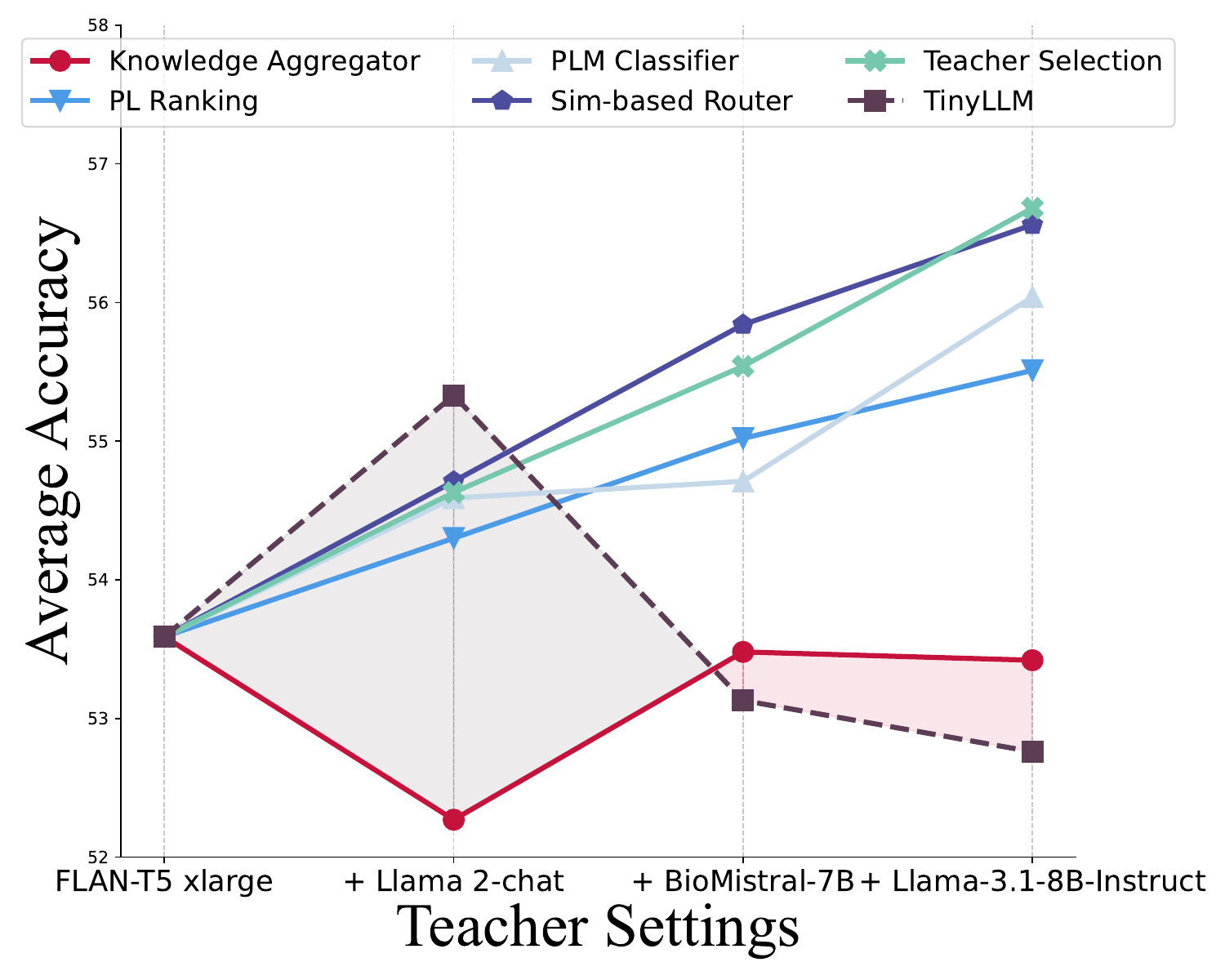}}
  \subfloat[783M student]{
  \includegraphics[width=0.30\textwidth]{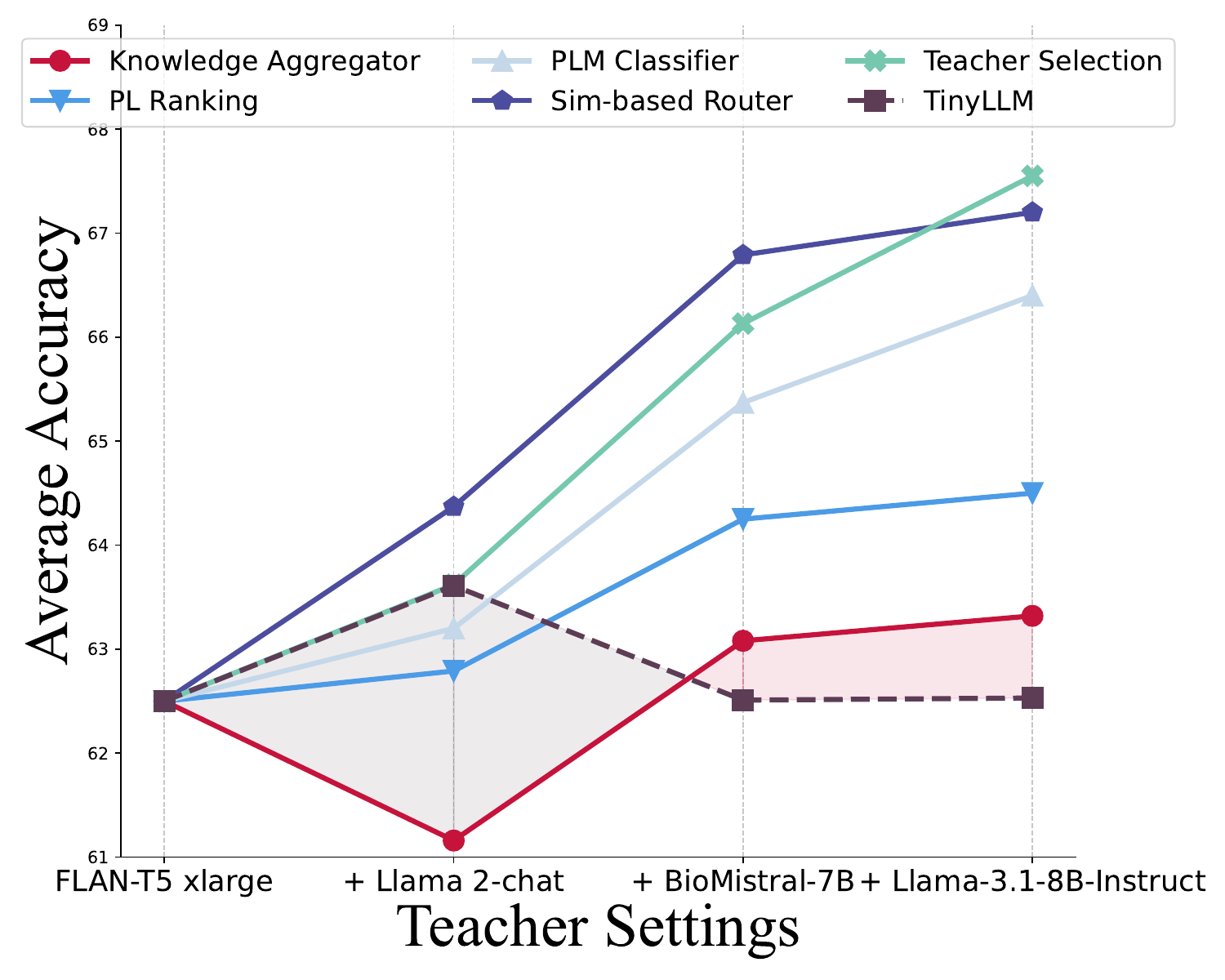}}
  \caption{Evaluation of knowledge purification methods with an increasing number of teacher LLMs. We visualize the CMV of knowledge aggregation as an example, represented by the signed area of the shaded region (positive when above TinyLLM and negative when below).}
  \label{fig:adaptability}
  \vspace{-0.1cm}
\end{figure}

\begin{table*}[t]
\small
\centering
\vspace{-0.2cm}
\caption{Adaptability of knowledge purification methods toward multiple teacher LLMs, evaluated in \textit{Conflict Mitigation Value} (CMV).}
\begin{tabular}{p{3.1cm}p{1.9cm}<{\centering}p{1.9cm}<{\centering}p{1.9cm}<{\centering}}
    \toprule
    \textbf{Method} & $\text{CMV}_\text{77M student}$ & $\text{CMV}_\text{248M student}$ & $\text{CMV}_\text{783M student}$ \\
    \midrule
    Knowledge Aggregation   &  \textcolor{red}{$-$0.003} & \textcolor{red}{$-$0.007} & \textcolor{red}{$-$0.004} \\
    Plackett-Luce Ranking   & \textcolor{darkgreen}{$+$0.001} & \textcolor{darkgreen}{$+$0.012} & \textcolor{darkgreen}{$+$0.010} \\
    PLM Classifier          & \textcolor{darkgreen}{$+$0.018} & \textcolor{darkgreen}{$+$0.014} & \textcolor{darkgreen}{$+$0.021} \\
    Similarity-based Router & \textbf{\textcolor{darkgreen}{$+$0.025}} & \textbf{\textcolor{darkgreen}{$+$0.020}} & \textbf{\textcolor{darkgreen}{$+$0.032}} \\
    Teacher Selection       & \textcolor{darkgreen}{$+$0.020} & \textcolor{darkgreen}{$+$0.019} & \textcolor{darkgreen}{$+$0.029} \\
    \bottomrule
\end{tabular}
\vspace{-0.6cm}
\label{tab:adaptability}
\end{table*}

We further evaluate the adaptability of knowledge purification methods in distillation training with a series of incremental teacher LLMs. We adopt the same experimental setup we used when revealing the knowledge conflict of TinyLLM. The evaluation result is visualized in Fig. \ref{fig:adaptability} (detailed in Appendix \ref{sec:supplementary_adaptability}). Tab. \ref{tab:adaptability} demonstrates the CMV of each method in mitigating knowledge conflicts.

We observe that applying knowledge aggregation reports negative CMV for all three student models, suggesting that it fails to effectively mitigate knowledge conflicts. In contrast, all LLM routing methods and RL-based teacher selection report positive CMV, indicating their potential in alleviating knowledge conflicts. Notably, the similarity-based router achieves the highest CMV across all three student models, demonstrating its superior adaptability to an incremental multi-teacher ensemble.

\subsection{Router-guided Out-of-domain Knowledge Distillation}
\label{sec:router_guided_distillation}

\begin{table*}[t]
\small
\centering
\vspace{-0.5cm}
\caption{Experimental results on \textit{out-of-domain} datasets. We verify the potential of utilizing LLM routing methods to guide multi-teacher knowledge distillation on out-of-domain data. The best results are highlighted in \textbf{bold}, with the second-best results are \underline{underlined}.}
\begin{tabular}{p{3.4cm}<{\centering}p{3.1cm}p{1.4cm}<{\centering}p{1.4cm}<{\centering}}
    \toprule
    \textbf{Setting} & \textbf{Method} & \textbf{PIQA} & \textbf{BioASQ} \\
    \midrule
    \multirow{4}{*}{\textit{Teacher}} & FLAN-T5 xlarge & 58.43 & 65.85 \\
                             & Llama 2-chat   & 60.50 & 69.92 \\
                             & BioMistral-7B     & 67.46 & 90.24 \\
                             & Llama-3.1-8B-Instruct  & 70.84 & 88.62 \\
    \midrule
    \midrule
    \multirow{7}{*}{\thead{FLAN-T5 small (77M)\\ \textit{as Student}}} & Inference & 20.78 & 47.97 \\
    & Fine-tuning & 42.33 & 78.86 \\
    & Distilling-Step-by-Step & 49.29 & \underline{81.30} \\
    & TinyLLM & 49.84 & 75.61 \\
    \cmidrule{2-4}
    & \cellcolor{gray!10}Plackett-Luce Ranking & \cellcolor{gray!10}\underline{52.77} & \cellcolor{gray!10}80.47 \\
    & \cellcolor{gray!10}PLM Classifier & \cellcolor{gray!10}50.05 & \cellcolor{gray!10}\textbf{82.11} \\
    & \cellcolor{gray!10}Similarity-based Router & \cellcolor{gray!10}\textbf{53.97} & \cellcolor{gray!10}\textbf{82.11} \\

    \midrule
    \midrule
    \multirow{7}{*}{\thead{FLAN-T5 base (248M)\\ \textit{as Student}}} & Inference & 30.47 & 57.72 \\
    & Fine-tuning & 47.55 & \underline{89.43} \\
    & Distilling-Step-by-Step & 55.93 & 86.18 \\
    & TinyLLM & 56.80 & 78.05 \\
    \cmidrule{2-4}
    & \cellcolor{gray!10}Plackett-Luce Ranking & \cellcolor{gray!10}\textbf{63.98} & \cellcolor{gray!10}86.99 \\
    & \cellcolor{gray!10}PLM Classifier & \cellcolor{gray!10}59.63 & \cellcolor{gray!10}85.37 \\
    & \cellcolor{gray!10}Similarity-based Router & \cellcolor{gray!10}\underline{63.33} & \cellcolor{gray!10}\textbf{90.24} \\

    \midrule
    \midrule
    \multirow{7}{*}{\thead{FLAN-T5 large (783M)\\ \textit{as Student}}} & Inference & 51.90 & 63.41 \\
    & Fine-tuning & 58.43 & 90.24 \\
    & Distilling-Step-by-Step & 60.72 & 86.99 \\
    & TinyLLM & \underline{68.88} & 82.93 \\
    \cmidrule{2-4}
    & \cellcolor{gray!10}Plackett-Luce Ranking & \cellcolor{gray!10}68.77 & \cellcolor{gray!10}\underline{87.80} \\
    & \cellcolor{gray!10}PLM Classifier & \cellcolor{gray!10}67.90 & \cellcolor{gray!10}83.74 \\
    & \cellcolor{gray!10}Similarity-based Router & \cellcolor{gray!10}\textbf{69.53} & \cellcolor{gray!10}\textbf{91.87} \\
    
    \bottomrule
\end{tabular}
\vspace{-0.5cm}
\label{tab:router_guided_distillation}
\end{table*}

Grounded in knowledge purification, we perform out-of-domain knowledge distillation. We exclude the knowledge aggregation which does not involve training and RL-based teacher selection with limited transferability. Instead, we focus on the proposed LLM routing approaches, as the generalization ability for out-of-domain data is a crucial metric for assessing the effectiveness of LLM routers. Physical Interaction Question Answering (PIQA)\citep{bisk2020piqa} and BioASQ \citep{tsatsaronis2015overview} serve as two out-of-domain datasets, which represent commonsense reasoning and biomedical reasoning, respectively. As illustrated in Tab. \ref{tab:router_guided_distillation}, utilizing LLM routers to guide knowledge distillation yields strong generalization ability. The similarity-based router achieves the highest accuracy across most settings, significantly exceeding baselines. Besides, the PL ranking outperforms the PLM classifier in overall performance and demonstrates robust generalization.

It is worthy noting that LLM routing approaches only require the question as input, eliminating the need for pre-sampling responses from teacher LLMs. When applied to a broader spectrum of out-of-domain data, high-performing LLM routers can effectively direct the sampling process of a multi-teacher ensemble, thereby facilitating subsequent knowledge distillation. This approach significantly reduces computational costs and resource consumption during the sampling phase while effectively alleviating knowledge conflicts and enhancing the performance of the distilled model. This presents a promising framework for the rapid and flexible implementation of multi-teacher knowledge distillation and the deployment of powerful yet lightweight models.

\subsection{Efficiency of Knowledge Distillation Approaches}

\begin{table*}[t]
    \small
    \centering
    \vspace{-0.5cm}
    \caption{Efficiency of different methods for distilling the FLAN-T5 large model on the ARC dataset. We consider both purification and distillation stages and utilize GPU hours as the metric.}
    \begin{tabular}{p{3.1cm}p{0.5cm}<{\centering}p{2.5cm}<{\centering}p{2.4cm}<{\centering}p{1cm}<{\centering}}
        \toprule
        \textbf{Method} & \textbf{$|\mathcal{T}|$} & \textbf{Purification Stage} & \textbf{Distillation Stage} & \textbf{Total} \\
        \midrule
        Fine-tuning             & 1 & - & 0.7 & 0.7 \\ 
        Distilling-Step-by-Step & 1 & - & 1.1 & 1.1 \\
        TinyLLM                 & 4 & - & 2.6 & 2.6 \\
        \midrule
        \rowcolor{gray!10}Knowledge Aggregation   & 4 & 5.2\footnotemark & 1.5 & 6.7 \\
        \rowcolor{gray!10}Plackett-Luce Ranking   & 4 & 0.1 & 1.3 & 1.4 \\
        \rowcolor{gray!10}PLMClassifier           & 4 & 0.5 & 1.2 & 1.7 \\
        \rowcolor{gray!10}Similarity-based Router & 4 & 0.6 & 1.2 & 1.8 \\
        \rowcolor{gray!10}Teacher Selection       & 4 & \multicolumn{2}{c}{3.5} & 3.5 \\
        \bottomrule
    \end{tabular}
    \vspace{-0.4cm}
    \label{tab:efficiency}
\end{table*}

In addition to performance evaluation, our experiments aim to evaluate the improvement offered by the knowledge purification methods concerning the efficiency of multi-teacher knowledge distillation. To ensure comparability, we fix the distillation epoch at 4000 and evaluate the training efficiency of different methods for distilling the FLAN-T5 large model on the ARC dataset, utilizing GPU hours as the quantitative metric. For knowledge purification methods, we consider the computational consumption of both the purification stage (e.g., aggregation, training the router) and the distillation stage. The results are exhibited in Tab. \ref{tab:efficiency}

Among the proposed knowledge purification methods, knowledge aggregation demonstrates the highest GPU comsumption due to the extensive call of the open-source LLM as the aggregator. When a proprietary LLM, such as GPT-4, is employed as the aggregator, its inefficiency is reflected in the equally significant latency and elevated costs. The RL-based teacher selection uniformly optimizes the purification and distillation, while its iterative training entails considerable computational consumption. Conversely, routing-based methods significantly enhance the efficiency of knowledge distillation compared to TinyLLM, which employs all rationales. Notably, training the LLM router demands fewer computational resources than the distillation process. Furthermore, given the generalization capabilities of routers, the impact of knowledge purification on efficiency enhancement is expected to be more pronounced when applied to out-of-domain data.

\footnotetext{The GPU hour is accessed when using Llama-3.1-70b as the aggregator.}

\section{Limitations}

Due to the limited computational resources, we only construct a teacher ensemble of four LLMs. While we strategically select teacher LLMs --- such as BioMistral-7B --- to supply domain-specific knowledge in biomedical reasoning, it remains challenging to guarantee that the knowledge represented by the teacher ensemble is comprehensive. In the practical application of multi-teacher knowledge distillation, a larger number of teacher models would be more conducive to enhancing the specialized domain capabilities of student models. The restricted number of teacher LLMs currently limits our ability to thoroughly assess the effectiveness of knowledge purification methods. Although we conduct a small-scale evaluation involving six teachers, as presented in Appendix \ref{sec:generality}, further evaluations will be necessary to fully explore and validate the adaptability.

Knowledge distillation is a universal framework for transferring knowledge from powerful models to weak ones. In this paper, we primarily focus on the NLP field and consider LLMs as the main subjects of our study. The proposed knowledge purification methods are tailored to the characteristics of LLM. Approaches such as LLM routing and teacher selection have the potential to generalize to broader machine learning tasks, but specific implementation and evaluation still require further investigation. We leave the investigation of such scenarios to future work.

\section{Conclusion}

In this paper, we tackle the challenges inherent in multi-teacher knowledge distillation frameworks, specifically addressing knowledge conflicts and high resource demands that hinder effective knowledge transfer to student models. We introduce the concept of \textbf{Knowledge Purification}, aimed at reducing divergent reasoning paths among teachers and enhancing distillation efficiency by consolidating the rationales from multiple teacher LLMs. We propose five methods to facilitate knowledge purification from distinct perspectives. Extensive experiments across commonsense and biomedical reasoning tasks demonstrate that proposed methods significantly enhance the performance of distilled models while effectively mitigating knowledge conflicts. Notably, the approach based on LLM routers showed exceptional performance on out-of-domain datasets, underscoring its broad applicability and practical value. In summary, our findings contribute to the advancement of multi-teacher knowledge distillation frameworks, paving the way for the practical deployment of efficient and powerful lightweight models.

\section*{Acknowledgments}
This work is supported by the National Natural Science Foundation of China under Grants U2436210.

\bibliography{iclr2026_conference}
\bibliographystyle{iclr2026_conference}

\newpage
\appendix

\section{Details of Knowledge Distillation for LLM}
\label{sec:details_of_knowledge_distillation}

Knowledge distillation \citep{hinton2015distilling} is designed to facilitate the transfer of knowledge from the teacher model to the student model. Unlike traditional deep learning, where soft labels can be obtained from teacher models, LLMs are often treated as black-box models. In this context, we typically regard the rationales generated by the teacher LLM as the embodiment of knowledge.\citep{hsieh2023distilling} We sample the rationale generated by the teacher LLM $T$ as $r_T=T(q,\mathcal{O},p_r)$ and construct the training set $\mathcal{D}=\{(q,\mathcal{O},o^*,r_T)\}$ with $|\mathcal{D}|$ samples. We expect the student model to inherit knowledge from the teacher's rationale and supervise this process using distillation loss, which is defined in the form of the cross-entropy loss as:
\begin{equation}
    \mathcal{L}_\text{DL}=-\frac{1}{|\mathcal{D}|}\sum_{(q,\mathcal{O},r_T)\in\mathcal{D}}\sum_{i=1}^{|r_T|}\log p({r_T}_i|r_{<i},q,\mathcal{O},p_r),
\end{equation}
where $|r_T|$ denotes the number of tokens in the teacher's rationale, and $p({r_T}_i|r_{<i},q,\mathcal{O},p_r)$ denotes the probability of generating token ${r_T}_i$, given the inputs and the already generated tokens. Besides, a prediction loss is introduced in training the student model as:
\begin{equation}
    \mathcal{L}_\text{PR}=-\frac{1}{|\mathcal{D}|}\sum_{(q,\mathcal{O},o^*)\in\mathcal{D}}\sum_{i=1}^{|o^*|}\log p(o^*_i|o_{<i},q,\mathcal{O},p_r),
\end{equation}
where $|o^*|$ denotes the number of tokens in the ground truth option, and $p({r_T}_i|r_{<i},q,\mathcal{O},p_r)$ denotes the probability of generating token $o^*_i$, given the input and the already generated tokens.

The global knowledge distillation process is formulated as:
\begin{equation}
    \mathcal{L}_\text{KD}=\mathcal{L}_\text{PR}+\lambda\mathcal{L}_\text{DL},
\end{equation}
where $\lambda$ is a hyper-parameter balance between the prediction loss $\mathcal{L}_\text{PR}$ and the distillation loss $\mathcal{L}_\text{DL}$.

\section{Details of Knowledge Purification Approaches}
\label{sec:details_of_knowledge_purification_approaches}

\subsection{Knowledge Aggregation}

We employ the powerful proprietary LLM --- GPT-4 \citep{achiam2023gpt} as the knowledge aggregator and use an instruction-tuning paradigm \citep{wei2021finetuned} to guide GPT-4 to perform aggregation in a generation fashion. Fig. \ref{fig:aggregator_prompt} shows the prompt used for knowledge aggregation. In practice, we pre-label 10 knowledge aggregation samples and randomly select one as the in-context example during inference.

\begin{figure}[t]
  \centering
  \includegraphics[width=0.98\textwidth]{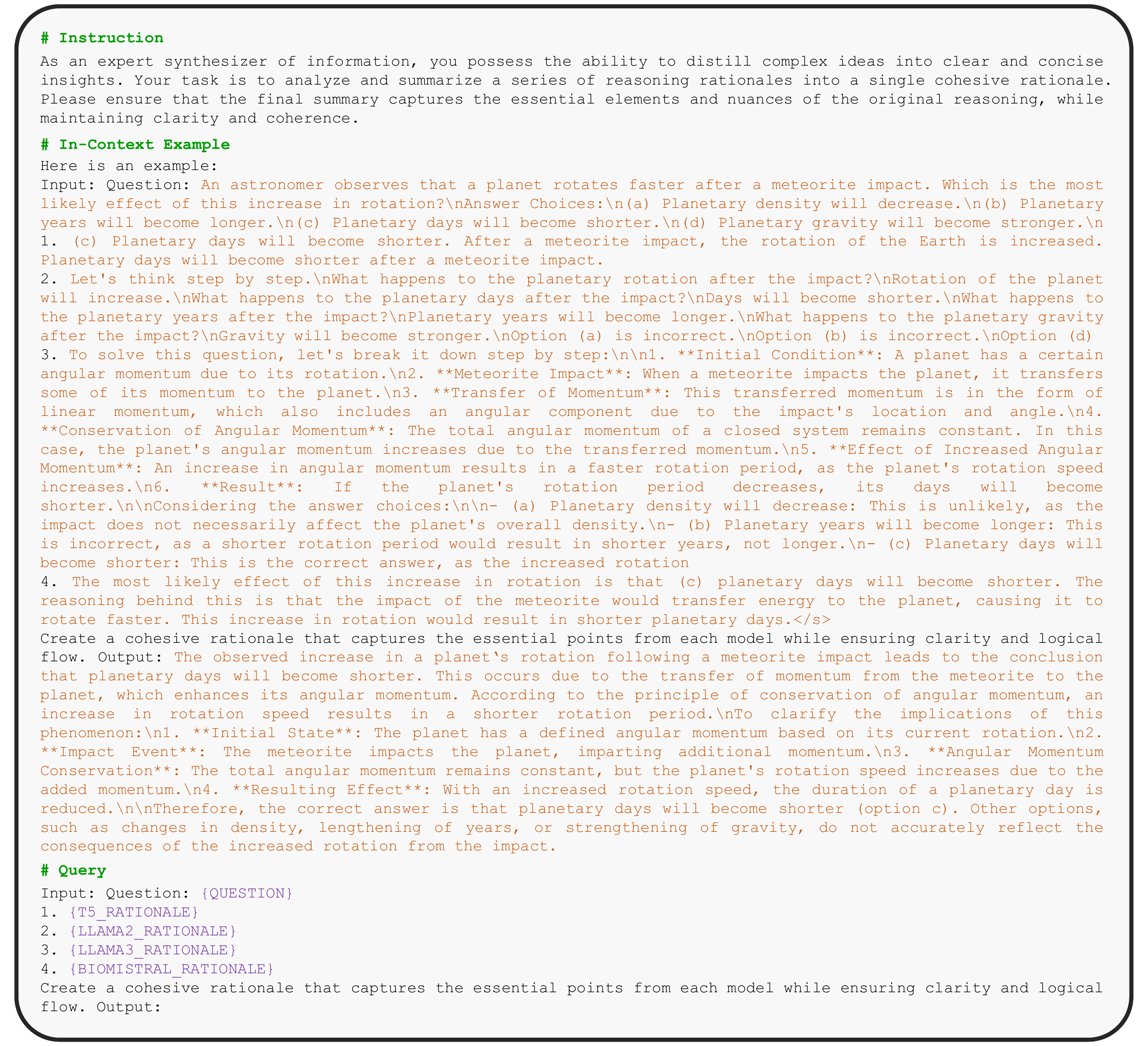}
  \caption{Prompt used for GPT-4 to perform knowledge aggregation, consisted of global instruction, in-context example, and query.}
  \label{fig:aggregator_prompt}
  \vspace{-0.5cm}
\end{figure}

In addition to the default GPT-4, we also consider the comparatively weaker Llama-3.1-70b \citep{dubey2024llama} as the aggregator to compare the impact of different aggregator choices. For both aggregators, we adopt the same prompt format and examine the performance of applying knowledge aggregation on distilling the FLAN-T5 large student model. The results, presented in Tab. \ref{tab:aggregator}, indicate that utilizing a more powerful model as the aggregator does not lead to a significant enhancement in performance.

\subsection{LLM Routing}
\label{sec:details_of_routing}

\paragraph{Plackett-Luce ranking}

The Plackett-Luce (PL) model was initially introduced by Plackett \citep{plackett1975analysis} to rank the horses in gambling. It has since been applied to describe the processes of ranking and selecting multiple candidate items in various domains. Notably, when the number of items is limited to two, the Plackett-Luce model simplifies to the Bradley-Terry (BT) model \citep{bradley1952rank}. Chatbot Arena Platform \citep{chiang2024chatbot} is built based on the ranking of LLMs by the Bradley-Terry model. The ranking is formulated as estimating the Bradley-Terry coefficient $\xi$:
\begin{equation}
    \mathop{\arg\min}_\xi\mathbb{E}_{(A,H)}[\ell(H,\frac{1}{1+e^{\xi_{A_1}-\xi_{A_2}}})],
\end{equation}

where $\ell$ denotes the binary cross-entropy loss. $A$ and $H$ denote the LLM pair and the human response. In our work, we consider the Plackett-Luce ranking and extend the framework to multi-teacher routing. We transform the problem into a cross-entropy loss optimization problem weighted by text similarity. We learn the PL coefficients $\xi=\{\xi_i:i=1,\ldots n\}$ by solving Eq. \ref{eq:plackett_luce}, where $\mathbf{y}_\text{gt}$ denotes the ground truth label for the optimal selection of the rationales and is represented in an one-hot encoding format. We consider the description generated by the teacher model that produces the minimum number of tokens while making correct selections as the optimal choice, taking into account the preference for computational efficiency and cost reduction in practical applications. $\omega'=\gamma^{1+\frac{\epsilon\cdot\epsilon'}{\Vert\epsilon\Vert\cdot\Vert\epsilon'\Vert}}$ measures the similarity between the input question $q$ and a question $q'$ in the database. Following \citep{ong2024routellm}, we adopt the exponential scale and choose $\gamma=10$.

It is important to note that no training is necessary for the ranking, and all computations are performed during inference. We will elaborate on the specific inference.

\begin{proposition}
Let $\xi$ satisfy Eq. \ref{eq:plackett_luce}. Then, it satisfies the following condition:
\begin{equation}
\label{eq:solve_plackett_luce}
\begin{split}
    &\frac{e^{\xi_i}}{\sum_{j=1}^ne^{\xi_j}}=\frac{\sum_{q'\in\mathcal{Q}_i}\omega'}{\sum_{q'}\omega'},\quad i=1,\ldots n, \\
    \mathrm{s.t.} \quad &\mathcal{Q}_i=\{q':T_i\text{ is optimal for } q'.\}
\end{split}
\end{equation}
\end{proposition}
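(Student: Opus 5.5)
Write $p_i=e^{\xi_i}/\sum_{j}e^{\xi_j}$ for the softmax probabilities. The plan is to treat Eq.~\ref{eq:plackett_luce} as a weighted multiclass cross-entropy problem in $\xi$ and read the proposition off its first-order optimality conditions.

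Since $\mathbf{y}_\text{gt}$ is one-hot, each database question $q'$ with optimal teacher $T_{k(q')}$ contributes $-\omega'\log p_{k(q')}$. Grouping the questions by their optimal teacher via the sets $\mathcal{Q}_i$, the objective becomes
\begin{equation*}
F(\xi)=-\sum_{i=1}^n W_i\log p_i = -\sum_{i=1}^n W_i\,\xi_i + W\log\sum_{j=1}^n e^{\xi_j},
\end{equation*}
where $W_i=\sum_{q'\in\mathcal{Q}_i}\omega'$ and $W=\sum_{q'}\omega'=\sum_i W_i$. The last equality uses that every $q'$ lies in exactly one $\mathcal{Q}_i$, which holds because the optimal rationale is chosen uniquely (the one with the fewest tokens among correct ones).

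Next, $F$ is convex and differentiable: it is a linear term plus a positive multiple of log-sum-exp. Its partial derivatives are
\begin{equation*}
\frac{\partial F}{\partial\xi_i}=-W_i+W p_i .
\end{equation*}
Setting these to zero at a minimizer gives $p_i=W_i/W$, which is exactly Eq.~\ref{eq:solve_plackett_luce}. Convexity also gives the converse: any $\xi$ with these softmax values is a minimizer. The solution is unique only up to adding a common constant to all $\xi_i$, which leaves $p$ unchanged, so the proposition is stated in terms of the softmax values.

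The main obstacle is existence of a minimizer, which is what makes ``let $\xi$ satisfy Eq.~\ref{eq:plackett_luce}'' meaningful. If some $\mathcal{Q}_i$ is empty, then $W_i=0$ and the infimum is approached only as $\xi_i\to-\infty$, so no finite minimizer exists. I would handle this in one of two ways. The first is to assume every $W_i>0$; this holds when each teacher is optimal for at least one database question, since $\omega'=\gamma^{1+\cos}>0$. In that case I would simply exhibit the finite minimizer $\xi_i=\log(W_i/W)$. The second is to interpret the statement in the extended sense, where the minimizing distribution $p$ itself still satisfies $p_i=W_i/W$, with $p_i=0$ for empty classes. An alternative route is to minimize directly over $p$ on the simplex, using Gibbs' inequality (nonnegativity of the KL divergence from $W_\cdot/W$ to $p$). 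That argument avoids derivatives altogether and covers the boundary case.
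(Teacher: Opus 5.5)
Your proof is correct, and it reaches the paper's stationarity equation by a slightly different route.

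The paper substitutes $\mathbf{y}=\mathrm{softmax}(\xi)$ and treats the $\mathbf{y}(i)$ as free variables subject only to $\sum_i\mathbf{y}(i)=1$. It then uses a Lagrange multiplier: $W_i/\mathbf{y}(i)=\eta$ together with normalization forces $\eta=W$ and $\mathbf{y}(i)=W_i/W$, in your notation $W_i=\sum_{q'\in\mathcal{Q}_i}\omega'$. (Its text says ``maximize'' where minimization is meant.) You stay in $\xi$-space, where normalization is already built into the softmax. Rewriting the objective as $-\sum_i W_i\xi_i+W\log\sum_j e^{\xi_j}$ lets you differentiate directly, with no multiplier and no change of variables.

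The paper's computation is shorter and makes the weighted-empirical-frequency form of the answer immediate. However, it tacitly identifies minimization over $\xi$ with minimization over the open simplex, and it establishes only the necessary condition, saying nothing about attainment.

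Your route gives more. Convexity of log-sum-exp gives the converse: every $\xi$ with $p_i=W_i/W$ is a minimizer, unique up to a common shift. That converse is exactly what justifies the paper's later use of the closed form in place of solving the optimization. You also correctly observe that a finite minimizer exists if and only if every $W_i>0$, so the hypothesis is vacuous otherwise. Your Gibbs'-inequality variant is the rigorous form of the paper's $\mathbf{y}$-space viewpoint: it replaces Lagrange calculus by $\mathrm{KL}\ge 0$ and also covers empty $\mathcal{Q}_i$.

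One small fix concerns why the $\mathcal{Q}_i$ partition the database. Justify it by the fact that each $q'$ carries a single one-hot label $\mathbf{y}_{\text{gt}}$, with ties broken and questions that no teacher answers correctly excluded. Do not rely on uniqueness of the fewest-token correct rationale, which fails under ties. The paper's final equality relies on the same partition implicitly.
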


\begin{proof}
Note:
\begin{equation*}
    \mathbf{y}(i)=\frac{e^{\xi_i}}{\sum_{j=1}^ne^{\xi_j}},\quad i=1,\ldots n,
\end{equation*}
\begin{equation*}
    g(\mathbf{y})=\mathop{\sum}_{q',\mathbf{y}_\text{gt}}[\omega'\cdot\ell(\mathbf{y}_\text{gt}, \frac{e^\xi}{\sum_{j=1}^ne^{\xi_j}})]=\mathop{\sum}_{q',\mathbf{y}_\text{gt}}[\omega'\cdot\ell(\mathbf{y}_\text{gt},\mathbf{y})],
\end{equation*}
\begin{equation*}
    h(\mathbf{y})=(\sum_{i=1}^n\mathbf{y}(i))-1.
\end{equation*}
To maximize $g(\mathbf{y})$ subject to $h(\mathbf{y})=0$, we consider the Lagrangian:
\begin{equation*}
    \mathcal{L}(\mathbf{y},\eta)=g(\mathbf{y})+\eta h(\mathbf{y}),
\end{equation*}
where $\eta$ is the Lagrange multiplier. We take the partial derivatives of $\mathcal{L}$ with respect to $\mathbf{y}$ and $\eta$, and set them to zero:
\begin{equation*}
\frac{\partial\mathcal{L}}{\partial\mathbf{y}(i)}=0,\frac{\partial\mathcal{L}}{\partial\eta}=0.
\end{equation*}
This yields the system of equations:
\begin{equation*}
    \sum_{q'\in\mathcal{Q}_i}\omega'\frac{\partial\log\mathbf{y}(i)}{\partial\mathbf{y}(i)}-\eta=0,\quad i=1,\ldots n,
\end{equation*}
\begin{equation*}
    (\sum_{i=1}^n\mathbf{y}(i))-1=0.
\end{equation*}
From this, we derive:
\begin{equation*}
    \frac{e^{\xi_i}}{\sum_{j=1}^ne^{\xi_j}}=\mathbf{y}(i)=\frac{\sum_{q'\in\mathcal{Q}_i}\omega'}{\sum_i\sum_{q'\in\mathcal{Q}_i}\omega'}=\frac{\sum_{q'\in\mathcal{Q}_i}\omega'}{\sum_{q'}\omega'},\quad i=1,\ldots n.
\end{equation*}
Thus, it satisfies the condition in Eq. \ref{eq:solve_plackett_luce}.
\end{proof}

In practical inference, we perform Plackett-Luce ranking based on Eq. \ref{eq:solve_plackett_luce}.

\begin{table*}[t]
    \small
    \centering
    \caption{Comparison of different aggregators for \textbf{Knowledge Aggregation}. The FLAN-T5 large model serves as the student model.}
    \begin{tabular}{p{2cm}p{1.3cm}<{\centering}p{1.3cm}<{\centering}p{1.3cm}<{\centering}p{1.3cm}<{\centering}p{1.3cm}<{\centering}}
        \toprule
        \textbf{Aggregator} & \textbf{OBQA} & \textbf{ARC} & \textbf{Riddle} & \textbf{PQA} & \textbf{Average} \\
        \midrule
        Llama-3.1-70b & 71.20 & 60.77 & 67.65 & 52.50 & 63.03 \\ 
        GPT-4         & 71.40 & 59.74 & 68.63 & 53.50 & 63.32 \\
        \bottomrule
    \end{tabular}
    \vspace{-0.2cm}
    \label{tab:aggregator}
\end{table*}

\paragraph{PLM classifier}

For training the PLM classifier, the definition of the ground truth label $\mathbf{y}_\text{gt}$ we use is consistent with that in the Plackett-Luce ranking. We utilize mDeBERTaV3-base \citep{he2021debertav3} as the language encoder and extract the semantic embedding $h_\text{CLS}$ for the input question. We use a two-layer MLP with a hidden layer dimension of 128 to predict the routing probabilities based on $h_\text{CLS}$. We perform full-parameter training and train the classifier for 5000 epochs, using the AdamW \citep{loshchilov2019decoupled} optimizer with batch size 16 and learning rate $5\times 10^{-5}$.

\paragraph{Similarity-based router}

We follow RouterDC \citep{chen2024routerdc} to perform similarity-based LLM routing and adopt two contrastive losses to train the router. For each question in training, we assign a binary score in $\{0,1\}$ to the LLM based on correctness, and sample the LLM with the highest and lowest scores respectively (randomly selecting one in the case of ties) to calculate the sample-LLM contrastive loss as:
\begin{equation}
    \mathcal{L}_\text{sample-LLM}=\sum_{q}-\log\frac{e^{\text{sim}\langle\mathcal{E}(q),\mathbf{k}^+\rangle}}{e^{\text{sim}\langle\mathcal{E}(q),\mathbf{k}^+\rangle}+e^{\text{sim}\langle\mathcal{E}(q),\mathbf{k}^-\rangle}},
\end{equation}
where $\mathbf{k}^+$ and $\mathbf{k}^-$ denote the LLM embedding of the LLM with the highest and lowest scores, respectively. The sample-sample contrastive loss is introduced to enhance the robustness of the vector representation, thereby promoting more stable training. We simplify its computation by directly classifying questions based on their respective datasets, as opposed to the semantic clustering approach employed in RouterDC:
\begin{equation}
    \mathcal{L}_\text{sample-sample}=\sum_{q}-\log\frac{e^{\text{sim}\langle\mathcal{E}(q),\mathcal{E}(q^+)\rangle}}{e^{\text{sim}\langle\mathcal{E}(q),\mathcal{E}(q^+)\rangle}+\sum_{q^-\in\mathcal{Q}^-} e^{\text{sim}\langle\mathcal{E}(q),\mathcal{E}(q^-)\rangle}},
\end{equation}
where $q^+$ and $\mathcal{Q}^-$ denote an in-group question and an out-group set of the question $q$, respectively.

The training objective of the router consists of sample-LLM and sample-sample losses as:
\begin{equation}
    \mathcal{L}_\text{sim}=\mathcal{L}_\text{sample-LLM}+\mathcal{L}_\text{sample-sample}.
\end{equation}

We adopt mDeBERTaV3-base \citep{he2021debertav3} as the language encoder to encode the input question. The dimension of the question embedding and the LLM embedding is 768. We train the model for 5000 epochs, using the AdamW \citep{loshchilov2019decoupled} optimizer with a batch size of 16 and a learning rate of $2\times 10^{-5}$.

\subsection{RL-based Teacher Selection}
\label{sec:details_of_rl_based_teacher_selection}

\begin{algorithm}[t]
\small
\textbf{Input}: Training dataset $\mathcal{D}$, a student model initialized as $\Theta^s=\Theta^s_0$, and a teacher selector initialized as $\theta=\theta_0$; hyper-parameters: $\lambda$, epoch number $L$, mini-batch size $b$ and learning rate $\beta$.
\begin{algorithmic}[1]
\For {epoch $l = 1$ to $L$}   
	\State Shuffle $\mathcal{D}$ to obtain a new training sequence.
	\For {each mini-batch $\mathcal{B}\in\mathcal{D}$} 
	    \State Samples actions for each $q\in\mathcal{B}$ with the teacher selector to determine the selected teacher $T_{i^*}$ by:
        \State \qquad Compute the state $s_i$ for each teacher $T_i$ by Eq. \ref{eq:rl_state};
        \State \qquad $i^*\leftarrow \arg\max_i \sigma(\mathbf{W}_is_i+\mathbf{b}_i)$;
        \State Allocate the actions $a_i$ based on the selection;
		\State Stored $(s_i,a_i)$ to the episode history $\mathcal{H}$;
		\State Update the parameter $\Theta^s$ of the student model under the guidance of $T_{i^*}$ by Eq. \ref{eq:multi_teacher_knowledge_distillation_with_knowledge_purification}.
    \EndFor
	\For {each $(s_i,a_i) \in \mathcal{H}$}
		\State Compute reward $r$:
        \State \qquad $r\leftarrow -\mathcal{L}_\text{PR}-\mathcal{L}_\text{DL}$;
        \State Compute the policy function $\pi_\theta(s_i,a_i)$ by Eq. \ref{eq:rl_policy};
		\State Update the parameter $\theta$ of the teacher selector by:
        \State \qquad $\theta\leftarrow\theta+\beta\sum_i r\sum_{q\in\mathcal{Q}}\nabla_\theta\pi_\theta(s_i, a_i)$.
    \EndFor
\EndFor
\end{algorithmic}
\caption{Training the RL-based Teacher Selector}
\label{alg:rl_teacher_selection}
\end{algorithm}

Alg. \ref{alg:rl_teacher_selection} shows the training procedures of the RL-based teacher selector. During training, we alternately perform knowledge distillation and train the teacher selector. We regard the teacher selector as a broadly defined LLM router: it must simultaneously receive the question and the rationale, and it can be optimized within a unified framework together with the knowledge distillation process.

For training the teacher selector, we set the the epoch number $L$ to 2, the mini-batch size $b$ to 8, and the learning rate $\beta$ to $5\times10^{-5}$.

\section{More Experimental Details}

\subsection{Details of Model Selection}

In our experiments, we choose the small (77M), base (248M) and large (783M) model of the FLAN-T5 \citep{chung2024scaling} series as student models. Due to computational resource constraint, we consider a multi-teacher ensemble comprising four LLMs: FLAN-T5 xlarge (2.85B), Llama 2-chat \citep{touvron2023llama}(7B), BioMistral-7B \citep{labrak2024biomistral}, and Llama-3.1-8B-Instruct \citep{dubey2024llama}. 

We construct the teacher LLM ensemble purposefully: For the student models, FLAN-T5 xlarge serves as an homogeneous model with a larger number of parameters, while Llama 2-chat operates as a heterogeneous model, also with a higher parameter size. BioMistral-7B contributes domain-specific knowledge in biomedical reasoning, and Llama-3.1-8B-Instruct, as an updated version of Llama 2-chat, offers enhanced performance and establishes a selection tendency that facilitates the training of knowledge purification methods.

\subsection{Details of Dataset Construction}
\label{sec:details_of_dataset_construction}

Our experiments involve six multiple choice question answering datasets, comprising four in-domain (ID) datasets: OpenBookQA (OBQA) \citep{mihaylov2018can}, AI2 Reasoning Challenge (ARC) \citep{clark2018think}, RiddleSense (Riddle), PubMedQA (PQA) \citep{jin2019pubmedqa} along with two out-of-domain (OOD) datasets: Physical Interaction Question Answering (PIQA) \citep{bisk2020piqa} and BioASQ \citep{tsatsaronis2015overview}.

\begin{table}[t]
    \small
    \centering
    \vspace{-0.1cm}
    \caption{Statistics of the datasets we used in our experiments. The numbers represent the sample size of each partitions for each dataset.}
    \setlength\tabcolsep{4.5pt}
    \begin{tabular}{p{1.3cm}<{\centering}p{1.3cm}p{1.1cm}<{\centering}p{1.1cm}<{\centering}p{1.1cm}<{\centering}p{1.1cm}<{\centering}}
        \toprule
        \textbf{Domain} & \textbf{Dataset} & \textbf{Train} & \textbf{Test} & \textbf{Valid} & \textbf{Public} \\
        \midrule
        \multirow{4}*{\textit{ID}}  & OBQA     & 3965  & 500  & 500 & 992 \\
                           & ARC      & 893   & 1165 & 295 & 224 \\
                           & Riddle   & 2808  & 510  & 511 & 702 \\
                           & PQA      & 400   & 400  & 100 & 100 \\
        \midrule
        \multirow{2}*{\textit{OOD}} & PIQA     & 16113 & 919  & 919 & - \\
                           & BioASQ   & 976   & 123  & 109 & - \\
        \bottomrule
    \end{tabular}
    \label{tab:statistics_datasets}
    \vspace{-0.1cm}
\end{table}

For a fair comparison, we divide each in-domain dataset into four subsets: training, testing, evaluation, and public sets. The testing and the evaluation set inherit from the original dataset. For in-domain dataset, the training and the public sets are randomly partitioned from the training set of the original dataset in a ratio of 4:1. The public sets from each in-domain dataset collectively form a joint dataset comprising 2018 samples, used for training LLM routers. The remaining three subsets (training, testing and evaluation) are employed for knowledge distillation. Tab. \ref{tab:statistics_datasets} summarizes the division of both in-domain and out-of-domain datasets.

\subsection{Details of Baselines}
\label{sec:details_of_baselines}

In our experiments, we compare the proposed methods against four baseline approaches, including \textbf{Inference}, which directly employs student model for evaluation; \textbf{Fine-tuning}, which fine-tunes the student model using the ground truth options as labels; \textbf{Distilling-Step-by-Step} \citep{hsieh2023distilling}, defined by Eq. \ref{eq:distill_step_by_step}, which leverages teacher's rationale as additional supervision; and \textbf{TinyLLM} \citep{tian2025beyond}, defined by Eq. \ref{eq:tinyllm}, which integrates all rationales generated by teachers to train the student model. For the Distilling-Step-by-Step, we train with four teacher LLMs individually and report the best results for each dataset. 

The standard knowledge distillation \citep{hinton2015distilling} is not included in the baselines because, when applied to distilling LLMs, it merely replaces the ground truth labels with those generated by the teacher LLM. This process is functionally similar to fine-tuning, and its theoretical performance upper limit is lower than that of fine-tuning. Therefore, it is excluded from consideration.

\begin{table}[t]
    \small
    \centering
    \caption{Performance of ABKD for distilling the FLAN-T5 large model using different teacher LLMs.}
    \begin{tabular}{p{2.9cm}p{1.1cm}<{\centering}p{1.1cm}<{\centering}p{1.1cm}<{\centering}p{1.1cm}<{\centering}p{1.2cm}<{\centering}}
        \toprule
        \textbf{Teacher Model} & \textbf{OBQA} & \textbf{ARC} & \textbf{Riddle} & \textbf{PQA} & \textbf{Average} \\
        \midrule
        FLAN-T5 xlarge        & 72.40 & 59.83 & 68.63 & 50.50 & 62.84 \\
        Llama 2-chat          & 70.00 & 58.45 & 67.84 & 51.25 & 61.89 \\
        BioMistral-7B         & 71.60 & 58.80 & 64.12 & 53.75 & 62.07 \\
        Llama-3.1-8B-Instruct & 72.40 & 60.77 & 69.22 & 52.75 & 63.79 \\
        \bottomrule
    \end{tabular}
    \label{tab:results_of_abkd}
\end{table}

We do not prioritize a comparative analysis between knowledge purification methods and single-teacher distillation approaches; instead, we emphasize the comparison with multi-teacher distillation methods. This preference stems from prior research \citep{hsieh2023distilling} demonstrating the superior cross-task adaptability of multi-teacher distillation techniques. We additionally conduct an evaluation of the single-teacher method ABKD \citep{wang2025abkd}, which represents the state-of-the-art knowledge distillation method for LLM. ABKD uses a pair of $\alpha$-$\beta$ parameters to weight the divergence loss during the distillation stage to achieve a balance between Forward Kullback-Leibler Divergence (FKLD) and Reverse Kullback-Leibler Divergence (RKLD). In our experiment, we utilize ABKD to distill the FLAN-t5 large model using different teacher LLMs. The results are presented in Tab. \ref{tab:results_of_abkd}. Notably, ABKD achieves an average accuracy up to 63.79\%, outperforming the performance of Distilling-Step-by-Step yet remaining inferior to the optimal knowledge purification method (RL-based Teacher Selection), which attains an average accuracy of 67.55\%.

\citep{xu2025twt} proposes a distillation framework TwT consists of two stages: Dual-Criteria Rejection Sampling and Habitual Reasoning Distillation. In the first stage, rationales produced by multi-teacher LLMs are screened. However, TwT is not regarded as a baseline method or as a knowledge-purification method in our paper, for three principal considerations: (1) The data screening in TwT can be interpreted as a combined approach that integrates a quality assessment process using LLMs with a resampling process based on similarity. This approach contrasts with our inclination to develop atomic methods within knowledge purification. In fact, the concept of introducing additional models for evaluation and performing selection based on similarity in TwT echoes the knowledge purification methods we propose. (2) TwT retains a pair of rationales from all rationales generated by multiple teacher LLMs, introducing new requirements for the subsequent distillation process, which does not align with the knowledge purification framework. (3) The quality evaluation of rationales in TwT relies on the weighting of multiple qualitative factors produced by LLM, which limits its generalization and real-time sampling capabilities. Nevertheless, the design motivation underlying TwT exhibits similarities with knowledge purification, and we anticipate to further exploring knowledge distillation in conjunction with the TwT framework in future work.

\subsection{Reasons for Choosing CMV as Metric}

We quantitatively evaluate the effectiveness of knowledge purification methods in mitigating knowledge conflicts among teacher LLM by computing the \textit{Conflict Mitigating Value} (CMV). The CMV serves as a performance-based metric rather than relying on information-theoretic measures at the rationale level. We have opted for the performance-based CMV for two principal reasons.

First, rationale-level metrics lack universality. Information-theoretic metrics, such as Jensen-Shannon Divergence, effectively quantify knowledge aggregation that produces new rationales. However, they are insufficient for methods like LLM routing, which require the selection of a single rationale from multiple options. Additionally, although we considered assessing the router's accuracy in choosing the rationale that aligns with the correct answer, this metric proves inadequate for measuring knowledge aggregation. Consequently, the pursuit of a unified rationale-level metric becomes particularly challenging.

Second, the rationales generated by teacher LLMs represent merely intermediate states within the knowledge distillation process and do not exert a direct influence on the performance of the final student model. Accordingly, we designed the CMV to concentrate on performance, enabling a more direct and meaningful evaluation.

\section{Supplementary Results}

\subsection{Extended Experiments of the TinyLLM Framework}
\label{sec:extended_experiments_tinyllm}

\begin{table}[t]
    \small
    \caption{Extended experimental results of TinyLLM as the number of teacher LLMs increases from 1 to 4, where `+' denotes the addition of the specified LLM as a teacher, and $|\mathcal{T}|$ denotes the number of teacher LLMs participating in the distillation. The best results across different datasets are highlighted in \textbf{bold}.}
\begin{tabular}{p{2.1cm}<{\centering}p{2.2cm}p{0.5cm}<{\centering}p{1.1cm}<{\centering}p{1.1cm}<{\centering}p{1.1cm}<{\centering}p{1.1cm}<{\centering}p{1.4cm}<{\centering}}
    \toprule
    \textbf{Student} & \textbf{Teacher Setting} & \textbf{$|\mathcal{T}|$} & \textbf{OBQA} & \textbf{ARC} & \textbf{Riddle} & \textbf{PQA} & \textbf{Average} \\
    \midrule
    \multirow{4}{*}{\thead{FLAN-T5 small \\ 77M}} & FLAN-T5 xlarge & 1 & 40.00 & \textbf{30.56} & 42.75 & 48.50 & 40.45 \\
    & + Llama 2-chat          & 2 & \textbf{46.80} & 30.21 & 43.92 & 49.50 & \textbf{42.61} \\
    & + BioMistral-7B         & 3 & 40.80 & 30.47 & \textbf{44.31} & 46.75 & 40.58 \\
    & + Llama-3.1-8B          & 4 & 40.60 & \textbf{30.56} & 44.12 & \textbf{50.50} & 41.45 \\

    \midrule
    \multirow{4}{*}{\thead{FLAN-T5 base \\ 248M}} & FLAN-T5 xlarge & 1 & 59.00 & 44.89 & 58.24 & \textbf{52.25} & 53.59 \\
    & + Llama 2-chat          & 2 & \textbf{63.40} & \textbf{46.44} & \textbf{60.98} & 50.50 & \textbf{55.33} \\
    & + BioMistral-7B         & 3 & 59.00 & 45.75 & 58.04 & 49.75 & 53.13 \\
    & + Llama-3.1-8B          & 4 & 58.80 & 46.18 & 58.82 & 47.25 & 52.76 \\

    \midrule
    \multirow{4}{*}{\thead{FLAN-T5 large \\ 783M}} & FLAN-T5 xlarge & 1 & 71.60 & 59.48 & 68.43 & 50.50 & 62.50 \\
    & + Llama 2-chat          & 2 & \textbf{73.20} & 59.91 & \textbf{68.82} & 52.50 & \textbf{63.61} \\
    & + BioMistral-7B         & 3 & 69.20 & \textbf{60.26} & 66.08 & \textbf{54.50} & 62.51 \\
    & + Llama-3.1-8B          & 4 & 68.80 & 59.83 & 67.25 & 54.25 & 62.53 \\
    
    \bottomrule
\end{tabular}
\label{tab:extended_results_tinyllm}
\end{table}

TinyLLM \citep{tian2025beyond} proposes a distillation paradigm that facilitates student model to learn from rationales generated by two teacher LLMs. Specifically, it adopts the following loss function when conduction multi-teacher knowledge distillation:
\begin{equation}
    \mathcal{L}_{\text{TinyLLM}}=\mathcal{L}_\text{PR}+\sum_{j=1}^n\lambda_j{\mathcal{L}_\text{DL}}_j,
\end{equation}
where $\lambda_j$ is the importance weight for $T_j$.

The basic TinyLLM framework relies on merely two teacher LLMs, constraining the practical application of knowledge distillation to improve the performance and domain-specific competencies of lightweight models. To explore the adaptability of TinyLLM to more teacher LLMs, we perform a series of extended experiments. We begin by using only the FLAN-T5 xlarge model as the teacher LLM and progressively incorporate additional teacher LLMs. In total, we conduct four groups of distillation training experiments with varying numbers of participating teachers. For a fair comparison, we set each $\lambda_j$ as 4, 2, 1.33, and 1 for the cases of 1, 2, 3, and 4 teachers, respectively. Tab. \ref{tab:extended_results_tinyllm} presents the detailed results of these extended experiments.

Our observations indicate that as the number of teacher models further increases, the performance of the distilled student models actually declines. For all three student models, the best overall performance is achieved when two teacher LLMs participate in the distillation process. When the number of teacher LLMs reaches four, the performance of the distilled FLAN-T5 base model is even inferior to that achieved with a single teacher. These findings contradict our initial expectation that increasing the number of teachers would enhance knowledge diversity and generalization capabilities. We attribute this performance degradation to the emergence of knowledge conflicts among the teachers, emphasizing the critical need for knowledge purification.

\subsection{Supplementary Results of Adaptability toward Multiple Teacher LLMs}
\label{sec:supplementary_adaptability}

We adopt the same experimental setup in Appendix \ref{sec:extended_experiments_tinyllm} to evaluate the adaptability of knowledge purification methods in distillation training with a series of incremental teacher LLMs. Tab. \ref{tab:adaptability_knowledge_aggregator}$\sim$\ref{tab:adaptability_rl_based_teacher_selection} exhibit the complete results of the experiment.

\begin{table}[t]
    \small
    \caption{Knowledge distillation results as the number of teacher LLMs increases from 1 to 4, applying \textbf{Knowledge Aggregation} for knowledge purification. The best results across different datasets are highlighted in \textbf{bold}.}
\begin{tabular}{p{2.1cm}<{\centering}p{2.2cm}p{0.5cm}<{\centering}p{1.1cm}<{\centering}p{1.1cm}<{\centering}p{1.1cm}<{\centering}p{1.1cm}<{\centering}p{1.4cm}<{\centering}}
    \toprule
    \textbf{Student} & \textbf{Teacher Setting} & \textbf{$|\mathcal{T}|$} & \textbf{OBQA} & \textbf{ARC} & \textbf{Riddle} & \textbf{PQA} & \textbf{Average} \\
    \midrule
    \multirow{4}{*}{\thead{FLAN-T5 small \\ 77M}} & FLAN-T5 xlarge & 1 & 40.00 & \textbf{30.56} & 42.75 & 48.50 & 40.45 \\
    & + Llama 2-chat          & 2 & 39.80 & 30.21 & 43.73 & 51.25 & 41.25 \\
    & + BioMistral-7B         & 3 & 40.20 & 30.47 & 43.14 & 49.50 & 40.83 \\
    & + Llama-3.1-8B          & 4 & \textbf{40.40} & 30.47 & \textbf{43.92} & \textbf{53.25} & \textbf{42.01} \\

    \midrule
    \multirow{4}{*}{\thead{FLAN-T5 base \\ 248M}} & FLAN-T5 xlarge & 1 & \textbf{59.00} & 44.89 & 58.24 & \textbf{52.25} & \textbf{53.59} \\
    & + Llama 2-chat          & 2 & 55.60 & 45.92 & 57.06 & 50.50 & 52.27 \\
    & + BioMistral-7B         & 3 & 58.20 & \textbf{46.35} & \textbf{58.63} & 50.75 & 53.48 \\
    & + Llama-3.1-8B          & 4 & 58.00 & 45.75 & 58.43 & 51.50 & 53.42 \\

    \midrule
    \multirow{4}{*}{\thead{FLAN-T5 large \\ 783M}} & FLAN-T5 xlarge & 1 & 71.60 & 59.48 & 68.43 & 50.50 & 62.50 \\
    & + Llama 2-chat          & 2 & 69.60 & 59.06 & 66.47 & 49.50 & 61.16 \\
    & + BioMistral-7B         & 3 & \textbf{72.00} & 58.80 & \textbf{69.02} & 52.50 & 63.08 \\
    & + Llama-3.1-8B          & 4 & 71.40 & \textbf{59.74} & 68.63 & \textbf{53.50} & \textbf{63.32} \\
    
    \bottomrule
\end{tabular}
\label{tab:adaptability_knowledge_aggregator}
\end{table}

\begin{table}[t]
    \small
    \caption{Knowledge distillation results as the number of teacher LLMs increases from 1 to 4, applying \textbf{Plackett-Luce Ranking} for knowledge purification. The best results across different datasets are highlighted in \textbf{bold}.}
\begin{tabular}{p{2.1cm}<{\centering}p{2.2cm}p{0.5cm}<{\centering}p{1.1cm}<{\centering}p{1.1cm}<{\centering}p{1.1cm}<{\centering}p{1.1cm}<{\centering}p{1.4cm}<{\centering}}
    \toprule
    \textbf{Student} & \textbf{Teacher Setting} & \textbf{$|\mathcal{T}|$} & \textbf{OBQA} & \textbf{ARC} & \textbf{Riddle} & \textbf{PQA} & \textbf{Average} \\
    \midrule
    \multirow{4}{*}{\thead{FLAN-T5 small \\ 77M}} & FLAN-T5 xlarge & 1 & 40.00 & 30.56 & 42.75 & 48.50 & 40.45 \\
    & + Llama 2-chat          & 2 & 40.20 & 30.64 & 43.33 & 49.75 & 40.98 \\
    & + BioMistral-7B         & 3 & \textbf{42.20} & 30.73 & 44.31 & 50.25 & 41.87 \\
    & + Llama-3.1-8B          & 4 & 42.00 & \textbf{31.76} & \textbf{45.69} & \textbf{50.50} & \textbf{42.49} \\

    \midrule
    \multirow{4}{*}{\thead{FLAN-T5 base \\ 248M}} & FLAN-T5 xlarge & 1 & 59.00 & 44.89 & 58.24 & 52.25 & 53.59 \\
    & + Llama 2-chat          & 2 & 60.20 & 45.41 & \textbf{58.82} & 52.75 & 54.30 \\
    & + BioMistral-7B         & 3 & \textbf{63.60} & 45.58 & 58.63 & 52.25 & 55.02 \\
    & + Llama-3.1-8B          & 4 & 62.40 & \textbf{46.27} & 58.63 & \textbf{54.75} & \textbf{55.51} \\

    \midrule
    \multirow{4}{*}{\thead{FLAN-T5 large \\ 783M}} & FLAN-T5 xlarge & 1 & 71.60 & 59.48 & 68.43 & 50.50 & 62.50 \\
    & + Llama 2-chat          & 2 & 72.00 & \textbf{59.83} & 68.82 & 50.50 & 62.79 \\
    & + BioMistral-7B         & 3 & 71.80 & 59.66 & 69.02 & \textbf{56.50} & 64.25 \\
    & + Llama-3.1-8B          & 4 & \textbf{72.60} & 59.48 & \textbf{69.41} & \textbf{56.50} & \textbf{64.50} \\
    
    \bottomrule
\end{tabular}
\label{tab:adaptability_pl_ranking}
\end{table}

\begin{table}[t]
    \small
    \caption{Knowledge distillation results as the number of teacher LLMs increases from 1 to 4, applying \textbf{PLM Classifier} for knowledge purification. The best results across different datasets are highlighted in \textbf{bold}.}
\begin{tabular}{p{2.1cm}<{\centering}p{2.2cm}p{0.5cm}<{\centering}p{1.1cm}<{\centering}p{1.1cm}<{\centering}p{1.1cm}<{\centering}p{1.1cm}<{\centering}p{1.4cm}<{\centering}}
    \toprule
    \textbf{Student} & \textbf{Teacher Setting} & \textbf{$|\mathcal{T}|$} & \textbf{OBQA} & \textbf{ARC} & \textbf{Riddle} & \textbf{PQA} & \textbf{Average} \\
    \midrule
    \multirow{4}{*}{\thead{FLAN-T5 small \\ 77M}} & FLAN-T5 xlarge & 1 & 40.00 & 30.56 & 42.75 & 48.50 & 40.45 \\
    & + Llama 2-chat          & 2 & 40.80 & 30.21 & 45.69 & 51.75 & 42.11 \\
    & + BioMistral-7B         & 3 & 43.60 & 30.39 & 48.63 & 52.50 & 43.78 \\
    & + Llama-3.1-8B          & 4 & \textbf{44.20} & \textbf{30.64} & \textbf{49.22} & \textbf{53.75} & \textbf{44.45} \\

    \midrule
    \multirow{4}{*}{\thead{FLAN-T5 base \\ 248M}} & FLAN-T5 xlarge & 1 & 59.00 & 44.89 & 58.24 & 52.25 & 53.59 \\
    & + Llama 2-chat          & 2 & 61.20 & 46.27 & 58.63 & 52.25 & 54.59 \\
    & + BioMistral-7B         & 3 & 61.00 & \textbf{46.44} & \textbf{59.41} & 52.00 & 54.71 \\
    & + Llama-3.1-8B          & 4 & \textbf{63.60} & 46.35 & 59.22 & \textbf{55.00} & \textbf{56.04} \\

    \midrule
    \multirow{4}{*}{\thead{FLAN-T5 large \\ 783M}} & FLAN-T5 xlarge & 1 & 71.60 & 59.48 & 68.43 & 50.50 & 62.50 \\
    & + Llama 2-chat          & 2 & 72.40 & \textbf{59.83} & 68.82 & 51.75 & 63.20 \\
    & + BioMistral-7B         & 3 & 72.80 & 59.66 & 69.02 & 60.00 & 65.37 \\
    & + Llama-3.1-8B          & 4 & \textbf{74.20} & 59.48 & \textbf{69.41} & \textbf{62.50} & \textbf{66.40} \\
    
    \bottomrule
\end{tabular}
\label{tab:adaptability_plm_classifier}
\end{table}

\begin{table}[t]
    \small
    \caption{Knowledge distillation results as the number of teacher LLMs increases from 1 to 4, applying \textbf{Similarity-based Router} for knowledge purification. The best results across different datasets are highlighted in \textbf{bold}.}
\begin{tabular}{p{2.1cm}<{\centering}p{2.2cm}p{0.5cm}<{\centering}p{1.1cm}<{\centering}p{1.1cm}<{\centering}p{1.1cm}<{\centering}p{1.1cm}<{\centering}p{1.4cm}<{\centering}}
    \toprule
    \textbf{Student} & \textbf{Teacher Setting} & \textbf{$|\mathcal{T}|$} & \textbf{OBQA} & \textbf{ARC} & \textbf{Riddle} & \textbf{PQA} & \textbf{Average} \\
    \midrule
    \multirow{4}{*}{\thead{FLAN-T5 small \\ 77M}} & FLAN-T5 xlarge & 1 & 40.00 & 30.56 & 42.75 & 48.50 & 40.45 \\
    & + Llama 2-chat          & 2 & 42.00 & 30.47 & 46.08 & 49.75 & 42.08 \\
    & + BioMistral-7B         & 3 & 46.00 & 32.02 & 49.22 & 51.25 & 44.62 \\
    & + Llama-3.1-8B          & 4 & \textbf{48.60} & \textbf{32.19} & \textbf{49.61} & \textbf{52.25} & \textbf{45.66} \\

    \midrule
    \multirow{4}{*}{\thead{FLAN-T5 base \\ 248M}} & FLAN-T5 xlarge & 1 & 59.00 & 44.89 & 58.24 & 52.25 & 53.59 \\
    & + Llama 2-chat          & 2 & 60.60 & 45.75 & 60.00 & 52.50 & 54.71 \\
    & + BioMistral-7B         & 3 & 63.80 & \textbf{46.35} & 60.20 & 53.00 & 55.84 \\
    & + Llama-3.1-8B          & 4 & \textbf{65.60} & \textbf{46.35} & \textbf{60.78} & \textbf{53.50} & \textbf{56.56} \\

    \midrule
    \multirow{4}{*}{\thead{FLAN-T5 large \\ 783M}} & FLAN-T5 xlarge & 1 & 71.60 & 59.48 & 68.43 & 50.50 & 62.50 \\
    & + Llama 2-chat          & 2 & 74.00 & 59.83 & 69.41 & 54.25 & 64.37 \\
    & + BioMistral-7B         & 3 & 75.20 & 59.91 & 69.80 & \textbf{62.25} & 66.79 \\
    & + Llama-3.1-8B          & 4 & \textbf{76.60} & \textbf{60.60} & \textbf{70.59} & 61.00 & \textbf{67.20} \\
    
    \bottomrule
\end{tabular}
\label{tab:adaptability_sim_based_router}
\end{table}

\begin{table}[t]
    \small
    \caption{Knowledge distillation results as the number of teacher LLMs increases from 1 to 4, applying \textbf{RL-based Teacher Selection} for knowledge purification. The best results across different datasets are highlighted in \textbf{bold}.}
\begin{tabular}{p{2.1cm}<{\centering}p{2.2cm}p{0.5cm}<{\centering}p{1.1cm}<{\centering}p{1.1cm}<{\centering}p{1.1cm}<{\centering}p{1.1cm}<{\centering}p{1.4cm}<{\centering}}
    \toprule
    \textbf{Student} & \textbf{Teacher Setting} & \textbf{$|\mathcal{T}|$} & \textbf{OBQA} & \textbf{ARC} & \textbf{Riddle} & \textbf{PQA} & \textbf{Average} \\
    \midrule
    \multirow{4}{*}{\thead{FLAN-T5 small \\ 77M}} & FLAN-T5 xlarge & 1 & 40.00 & 30.56 & 42.75 & 48.50 & 40.45 \\
    & + Llama 2-chat          & 2 & 41.40 & 30.73 & 47.84 & 48.75 & 42.18 \\
    & + BioMistral-7B         & 3 & 44.40 & \textbf{31.59} & 48.04 & 52.25 & 44.07 \\
    & + Llama-3.1-8B          & 4 & \textbf{46.80} & 30.39 & \textbf{48.82} & \textbf{52.50} & \textbf{44.63} \\

    \midrule
    \multirow{4}{*}{\thead{FLAN-T5 base \\ 248M}} & FLAN-T5 xlarge & 1 & 59.00 & 44.89 & 58.24 & 52.25 & 53.59 \\
    & + Llama 2-chat          & 2 & 60.80 & 46.44 & 60.78 & 50.50 & 54.63 \\
    & + BioMistral-7B         & 3 & 63.60 & 46.61 & 60.20 & 51.75 & 55.54 \\
    & + Llama-3.1-8B          & 4 & \textbf{65.00} & \textbf{46.70} & \textbf{61.76} & \textbf{53.25} & \textbf{56.68} \\

    \midrule
    \multirow{4}{*}{\thead{FLAN-T5 large \\ 783M}} & FLAN-T5 xlarge & 1 & 71.60 & 59.48 & 68.43 & 50.50 & 62.50 \\
    & + Llama 2-chat          & 2 & 73.60 & 60.60 & 69.02 & 51.25 & 63.62 \\
    & + BioMistral-7B         & 3 & 72.00 & 60.77 & 70.00 & 61.75 & 66.13 \\
    & + Llama-3.1-8B          & 4 & \textbf{75.20} & \textbf{61.12} & \textbf{70.39} & \textbf{63.50} & \textbf{67.55} \\
    
    \bottomrule
\end{tabular}
\label{tab:adaptability_rl_based_teacher_selection}
\end{table}

\begin{table}[t]
    \small
    \centering
    \caption{Supplementary results on commonsense reasoning, biomedical reasoning, and multitask language understanding with six teacher LLMs. The FLAN-T5 large model serves as the student model. The best results across different datasets are highlighted in \textbf{bold}.}
    \begin{tabular}{p{3.1cm}p{1.1cm}<{\centering}p{1.1cm}<{\centering}p{1.1cm}<{\centering}p{1.1cm}<{\centering}p{1.1cm}<{\centering}p{1.3cm}<{\centering}}
        \toprule
        \textbf{Method} & \textbf{OBQA} & \textbf{ARC} & \textbf{Riddle} & \textbf{PQA} & \textbf{MMLU} & \textbf{Average} \\
        \midrule
        Inference               & 50.40 & 51.07 & 39.80 & 45.50 & 45.10 & 46.37 \\
        Distilling-Step-by-Step & 71.60 & \textbf{60.77} & 68.43 & 51.00 & 51.84 & 60.73 \\
        TinyLLM                 & 70.40 & 54.25 & 67.25 & 52.75 & 49.28 & 58.79\\
        \rowcolor{gray!10}Similarity-based Router & \textbf{77.00} & 60.17 & \textbf{70.78} & \textbf{62.75} & \textbf{55.26} & \textbf{65.19} \\
        \bottomrule
    \end{tabular}
    \label{tab:generalization}
\end{table}

\subsection{Generality toward Broader Task Domains and More Teacher LLMs}
\label{sec:generality}

The current assessment utilizes four teacher LLMs, focusing mainly on the task domains of commonsense and biomedical reasoning. Our objective is to evaluate the generalization ability of knowledge purification methods across a broader range of task domains and more teacher LLMs. To this end, we extend our experiments using the MMLU dataset \citep{hendrycks2020measuring}, which encompasses 57 tasks from various branches of knowledge. Additionally, we introduce two supplementary teacher LLMs, llemma\_7b \citep{azerbayev2023llemma} and Mistral-7B-chat \citep{Jiang2023Mistral7}, bringing a total of six teacher LLMs. In these experiments, we employ FLAN-T5 large as the student model for knowledge distillation and consider the similarity-based LLM routing for effective knowledge purification.

Tab. \ref{tab:generalization} demonstrates the results of evaluation. The similarity-based router achieves the highest average accuracy of 65.19\%, surpassing the baselines by at least 7.3\%. On the MMLU dataset, routing-based method also attains the highest accuracy of 55.26\%. The results verifies the generalization ability of the LLM routing method toward broader task domains and a large number of teacher LLMs, highlighting the effectiveness of the knowledge purification framework.

Despite our current inability to conduct experiments involving a greater number of teacher LLMs (8, 10, or more) due to computational resource limitations, we are encouraged by the success of existing LLM routing methods when applied to more than 10 candidate LLMs \citep{chen2024routerdc, hu2024routerbench}. It is evident that while knowledge purification methods improve performance, the effectiveness of knowledge distillation does not increase indefinitely with the addition of teacher LLMs. A more practical goal is to leverage an appropriate number of teacher models to enhance the overall capabilities and specific expertise of the student model.

Furthermore, We also look forward to validating the performance of knowledge purification across a wider range of NLP applications. This may necessitate stronger student models, as well as more sophisticated processes for distillation data sampling. We intend to explore these scenarios in our future research to further advance the field.

\subsection{Case Study}

\begin{figure}[t]
  \centering
  \includegraphics[width=0.99\textwidth]{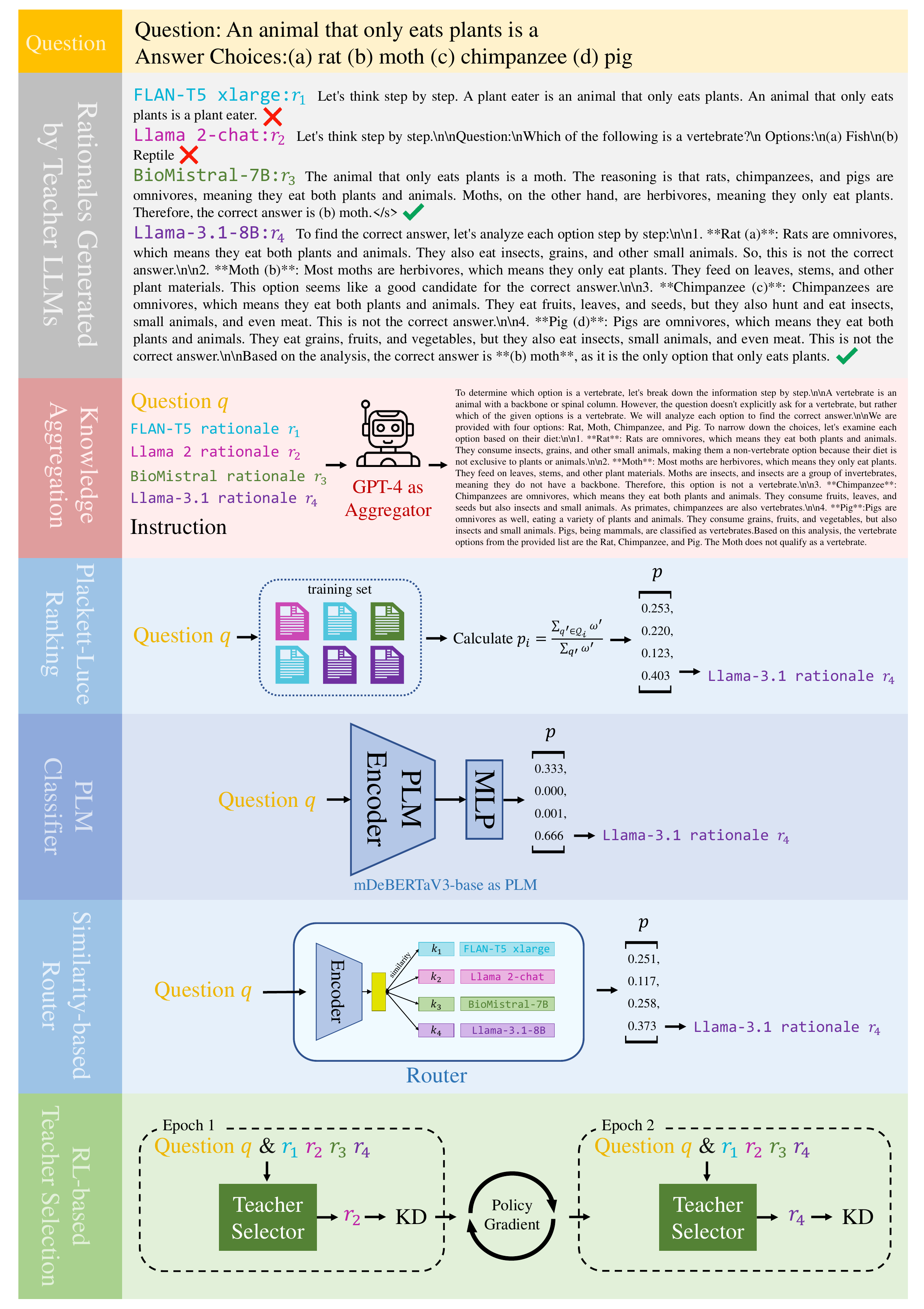}
  \caption{An example on the OBQA dataset. Five proposed methods are used for knowledge purification.}
  \label{fig:case_study}
  \vspace{-0.5cm}
\end{figure}

We present a detailed case study and visualization of the knowledge purification process on the OBQA dataset, as illustrated in Fig. \ref{fig:case_study}.

\section{The Use of Large Language Models}

In this section, we clarify that no LLMs were employed during the original conceptualization, or drafting phases of this paper. All core content and findings are the result of original research and critical evaluation by the authors. The use of LLMs was strictly limited to post-drafting linguistic refinements, such as identifying grammatical errors, correcting typos, and improving sentences.

\end{document}